\newcommand{\tabularexample}{\ensuremath{x}\xspace}
\newcommand{\setofexamples}{\ensuremath{\Smc}\xspace}
\theoremstyle{definition}
\newcommand{\hsp}{\mathcal{H}}
\DeclareMathOperator{\efn}{\ensuremath{\mathsf{error}}\xspace}
\newcommand{\samples}{\mathcal{S}}
\newcommand{\trepan}{\textsc{TrePAC}\xspace}
\newtheorem{theorem}{Theorem}
\newtheorem{definition}[theorem]{Definition}
\newtheorem{corollary}[theorem]{Corollary}
\newtheorem{lemma}[theorem]{Lemma}
\newcommand{\feature}{\ensuremath{f}\xspace} 
\newcommand{\bestsplit}{\ensuremath{\mathsf{best\_{split}}}\xspace}
\newcommand{\e}{\ensuremath{e}\xspace} 
\newcommand{\setunclassexamples}{\ensuremath{\Smc_{\mathbf{e}}}\xspace}
\newcommand{\tree}{\ensuremath{T}\xspace}
\newcommand{\examples}{\ensuremath{\Emc}\xspace}
\newcommand{\hypothesisSpace}{\ensuremath{\Hmc}\xspace}
\newcommand{\hypothesisSpaceDT}{\ensuremath{\Hmc_{DT}}\xspace}
\newcommand{\TopDown}{\text{{TopDown}}\xspace}
\newcommand{\Trepan}{\textsc{TrePAC}}
\newcommand{\trepac}{\textsc{TrePAC}\xspace}
\newcommand{\probabilitydistribution}{\ensuremath{\mathcal{D}}\xspace}
\newcommand{\EX}[3]{\ensuremath{{\sf EX}^{#1}_{#2,#3}}\xspace}
\newcommand{\bbC}{\ensuremath{\mathbb{C}}\xspace}
\newcommand{\bbCDT}{\ensuremath{\mathbb{C}}_{DT}}
\newcommand{\setofconstraints}{\ensuremath{\Phi}\xspace}
\newcommand{\arxivurl}{\citet{ozaki2024extractingpacdecisiontrees}} 
\newif\IFproofread
\title{Extracting PAC Decision Trees from Black Box Binary Classifiers:   The Gender Bias Case Study on BERT-based Language Models}
\author{
	Ana Ozaki\textsuperscript{{\rm 1},{\rm 3}}, Roberto Confalonieri\textsuperscript{\rm 2}, Ricardo Guimarães\textsuperscript{\rm 3}, Anders Imenes\textsuperscript{\rm 3}
	\\
	\textsuperscript{\rm 1}Universitetet i Oslo, Norway\\
	\textsuperscript{\rm 2}Universita degli Studi di Padova, Italy\\
	\textsuperscript{\rm 3}Universitetet i Bergen, Norway\\
	anaoz@uio.no, roberto.confalonieri@unipd.it
}
\date{}
\begin{document}

\maketitle

\begin{abstract}
	Decision trees are a popular machine learning method, valued for their inherent explainability. In Explainable AI, decision trees serve as surrogate models for complex black box AI models or as approximations of parts of such models. 
	A key challenge of this approach is assessing how accurately the extracted decision tree represents the original model and determining the extent to which it can be trusted as an approximation of its behavior. 
	In this work, we investigate the use of the Probably Approximately Correct (PAC) framework to provide a theoretical guarantee of fidelity for decision trees extracted from AI models. Leveraging the theoretical foundations of the PAC framework, we adapt a decision tree algorithm to ensure a PAC guarantee under specific conditions. We focus on binary classification and conduct experiments where we extract decision trees from BERT-based language models with PAC guarantees. Our results indicate occupational gender bias in these models, which confirm previous results in the literature. Additionally, the decision tree format enhances the visualization of which occupations are most impacted by social bias.
\end{abstract}

%

\section{Introduction}
Decision trees are known to be easy to explain because, given an input, one can directly trace the decisions which led to the output. 
However, this method lacks the generalizability of architectures based on neural networks that have hidden layers.  Although there has been some progress in increasing the generalizability of decision trees~\cite{DBLP:conf/aiia/FrosstH17} and even translating trained neural networks into decision trees~\cite{DBLP:journals/corr/abs-2210-05189}, there appears to be a trade-off between generalizability and interpretability that is not easy to balance.

To enjoy the best of both worlds,  the AI research community has been exploring the idea of \emph{extracting decision trees from trained neural networks}. The extracted trees can then serve as an explainable approximation of these black-box models~\cite{ECAI-2020,DBLP:journals/pr/KrishnanSB99,DBLP:conf/smartgreens/VasilevMN20,DBLP:conf/kdd/Boz02,DBLP:journals/acisc/BolognaH18,DBLP:journals/tnn/SchmitzAG99,DBLP:conf/flairs/DanceyMB04,DBLP:journals/frai/BurkhardtBWAKK21,DBLP:conf/ijcai/SetionoL95,DBLP:conf/uai/NanfackTF21,Craven1995,DBLP:journals/ijdats/YoungWRWPSM11,Awudu2015XTREPANAM} (see also \cite{DBLP:journals/tnn/ChorowskiZ11,DBLP:conf/sat/ShihDC19} for works extracting decision diagrams from trained neural networks). Previous works in this direction indicate that, in practice, decision trees are indeed useful for understanding patterns in the data, which cannot be understood from black-box models. 
This approach has been applied to various domains such as medicine~\cite{DBLP:conf/ijcnn/DanceyBM10}, traffic signal control~\cite{DBLP:journals/tcss/ZhuYC23}, risk credit evaluation~\cite{DBLP:journals/mansci/BaesensSMV03},
water resource management~\cite{Wei2012}, multimedia~\cite{DBLP:conf/itat/FantaPH19}, among others. 
The decision tree in this case functions as a \emph{surrogate} for the original model. 
The advantage of surrogate models is their ability to abstract the original model, focusing on specific inputs or aspects of interest.  

In this paper, we apply this approach to unveil occupational gender bias in BERT-based language models.  
By extracting rules that approximate \emph{parts of} the model of interest, we can avoid the challenges associated with approximating the whole language model, which could be computationally infeasible or inefficient. Moreover, the approach of extracting decision trees can be applied to black-box models regardless of their internal architectures, which are constantly evolving as the field of artificial intelligence tackles increasingly complex tasks. 

The main challenge with such approximations lies in determining \emph{how accurately the extracted decision tree represents the relevant parts of the original black box model}. 
To the best of our knowledge, current practical approaches for extracting decision trees from trained neural networks do not provide theoretical guarantees regarding the fidelity of the decision trees with respect to the original model. What guarantees would be appropriate? How could such guarantees be given? 
%
In Computational Learning Theory, the most well studied framework which gives correctness guarantees is known as the Probably Approximately Correct (PAC) framework~\cite{DBLP:conf/stoc/Valiant84}.  
In the original framework, a learner receives examples classified according to some \emph{target} function. The classification is binary and the classified examples come according to a probability distribution that is fixed but arbitrary and unknown. The learner then creates a \emph{hypothesis} consistent with the information received. In a nutshell, the theoretical correctness guarantee is that, provided with enough data, with high probability, the difference between any hypothesis created by the learner that is consistent with the data and the target function is minimal. 
This ``difference''  is quantified by the probability of misclassification, known as the \emph{true error}~\cite{UnderML2014}.
The \emph{sample complexity} quantifies the amount of data needed for such a guarantee ~\cite{doi:10.1137/1116025}.

In this work, we study the PAC framework and explore its potential to provide a theoretical correctness guarantee for decision trees extracted from black-box models. Our contributions can be summarized as follows:
\begin{itemize}
	\item We provide formal definitions for the relevant notions we use. In particular, we relate the notion of fidelity in the literature on decision trees with the PAC guarantee (\cref{sec:pac}). We focus on binary classification, as this is the most studied setting within the PAC framework.
	\item On the theoretical side, we provide a bound on the sample size needed for PAC learning when the training error is bounded by
	$k/|\Smc|$, where $|\Smc|$ is the size of the training set and $k\in\mathbb{N}$ (\cref{change_me}). This bound is used in a decision tree algorithm (\cref{sec:decisiontrees}). 
	\item On the practical side (\cref{sec:experiments}), we perform experiments based on a recent gender bias case study~\cite{BLUM2023109026}, where we extract decision trees using BERT-based models~\cite{DBLP:conf/naacl/DevlinCLT19,DBLP:journals/corr/abs-1907-11692}. 
\end{itemize}
We conclude and mention future work in \cref{sec:conclusion}.




\section{The PAC Framework}\label{sec:pac}

Here we   introduce the relevant notation for decision trees and PAC learning 
(\cref{subsec:basicpac}). 
We then show in \cref{subsec:trainingerror} 
our theoretical bound on the number of examples when the hypothesis is inconsistent with the training set.

\subsection{Basic Definitions}\label{subsec:basicpac}
\newcommand{\target}{\ensuremath{t}\xspace}
\newcommand{\leaves}[1]{\ensuremath{{\sf leaves}({#1})}\xspace}

We now formalise the PAC learning framework. We follow the notation for the definition of PAC learning 
adopted in the literature~\cite{DBLP:conf/stoc/Valiant84,UnderML2014,Kearns1999,DBLP:journals/ki/Ozaki20}.
\begin{definition}
	A \emph{concept class} $\mathbb{C}$ is a triple 
	$(\examples, \hypothesisSpace, \mu)$ where  
	$\hypothesisSpace$ is a set of concept representations\footnote{In the Machine Learning Theory literature,
		a \emph{concept} is often defined as a set of examples and a concept representation is a way of representing such set.}, called \emph{hypothesis space};
	$\examples$ is a set of examples; 
	and 
	$\mu$ is a function that maps each element of $\hypothesisSpace$ to a 
	subset of    $\examples$. 
\end{definition}
Each element of $\hypothesisSpace$ is called a \emph{hypothesis}.
The \emph{target representation} (here simply called \emph{target}) 
is a fixed but arbitrary element of $\hypothesisSpace$,  representing 
the kind of knowledge  that is aimed to be learned. 
Given a target $\target\in\hypothesisSpace$,
we say that the examples in $\mu(\target)$ are \emph{positive examples}, otherwise, they are called 
\emph{negative examples}.
A \emph{classified example} w.r.t. the target \target is
a pair $(\e,{\sf c}(\e))$, where ${\sf c}(\e)=1$ (positive) if $\e\in\mu(\target)$ and  ${\sf c}(\e)=0$ (negative) otherwise.
A \emph{training set} is a set of classified examples (we may omit   that this is w.r.t.   \target). Given a training set \setofexamples, we denote by \(\samples_e\)  the examples in \(\samples\).  In other words,  \(\samples_e\) is the result of removing the labels in  \(\samples\).  

\begin{definition}[Induced Probability Distribution] Let $m$ be a positive integer and let $\probabilitydistribution$ be a probability distribution over a set of examples \examples. 
	The probability distribution $\probabilitydistribution$ \emph{induces} the probability distribution $\probabilitydistribution^m$ over the set of all 
	sequences
	$\setunclassexamples$ of $\examples$ with $m$ elements, defined as
	\[\probabilitydistribution^m(\{\setunclassexamples \}):=\prod_{\e\in \setunclassexamples}\probabilitydistribution(\{\e\}).\]    
\end{definition}
%
%
%


\begin{definition}[Example Query]
	Let $\bbC=(\examples, \hypothesisSpace, \mu)$ be a concept class. 
	Given a target $\target \in \hypothesisSpace$, let 
	\EX{\probabilitydistribution}{\ensuremath{\mathbb{C}}}{\target} 
	be the {\em oracle} that takes no input, and outputs a
	classified example $(\e, {\sf c}(\e))$, where $\e \in \examples$ is drawn according to the probability
	distribution $\probabilitydistribution$ and ${\sf c}(\e) = 1$, if $\e \in \mu(\target)$, and ${\sf c}(\e) = 0$, otherwise. An \emph{example
		query} is a call to the oracle \EX{\probabilitydistribution}{\ensuremath{\mathbb{C}}}{\target}. A sample generated by 
	\EX{\probabilitydistribution}{\ensuremath{\mathbb{C}}}{\target}  is a 
	multiset of
	classified examples, independently and identically distributed according
	to $\probabilitydistribution$, drawn by calling \EX{\probabilitydistribution}{\ensuremath{\mathbb{C}}}{\target}.
\end{definition}

\begin{definition}[PAC learnability]
	A concept class \(\bbC\) is {\em PAC-learnable} if there is a function $f:(0,1/2)^2 \rightarrow \mathbb{N}$
	and a deterministic algorithm s.t.\ for every $\epsilon, \delta  \in (0,1/2)   \subseteq \mathbb{R}$, every probability distribution $\probabilitydistribution$ on $\examples$ and every target $\target \in \hypothesisSpace$, given a sample of size $m \geq f(\epsilon,\delta)$ generated by $\EX{\probabilitydistribution}{\mathbb{C}}{\target}$, the algorithm always halts and outputs $h \in \hypothesisSpace$ such that with probability at least $1-\delta$ over the choice of $m$ examples (counting repetitions) in $\examples$, we have  $\probabilitydistribution(\mu(h) \oplus \mu(\target)) \leq \epsilon$.
\end{definition}
The above condition can also be written 
as  $\probabilitydistribution^{m} (\{\setunclassexamples \mid \probabilitydistribution(\mu(\target)  \oplus \mu(h)) \leq \epsilon\}) \geq (1-\delta)$. 
Given a training set $\setofexamples$, let $h_{\setofexamples}$ be a fixed but arbitrary element of \(\hypothesisSpace\) such that, for all 
$(\e,{\sf c}(\e))\in \setofexamples$, ${\sf c}(\e)=1$ iff $e\in\mu(h_\setofexamples)$ (i.e., $h_\setofexamples$ and $\target$ agree on their classification of the elements of $\setunclassexamples$). 
Since $\target\in \hypothesisSpace$ such $h_\setofexamples$   exists. 
The error of $h$ is the probability that $h$ misclassifies the examples drawn according to $\probabilitydistribution$.

\begin{definition}[True Error]\label{def:trueerror}
	Given a hypothesis  $h\in\hypothesisSpace$, a target function $\target$ and a probability distribution $\probabilitydistribution$. The true error of $h$ w.r.t. $\target$ and $\probabilitydistribution$ is defined as 
	\begin{equation*}
		\efn(h,\target,\probabilitydistribution) := \probabilitydistribution(\mu(h) \oplus \mu(\target)).
	\end{equation*}
\end{definition}

Since the learner does not know what $\probabilitydistribution$ and $\target$ are, the true error is not directly available to the learner.  A useful notion of error that can be calculated for a hypothesis   created by the learner is the {\em training error}, defined as follows.


\begin{definition}[Training Error]
	Given a hypothesis  $h\in\hypothesisSpace$ and a training set \setofexamples, the \emph{training error} of $h$ is defined as  the ratio of misclassified examples w.r.t. \setofexamples, in symbols $\efn_\Smc(h)$ is
	\begin{equation*}
		\resizebox{1\hsize}{!}{
			$\dfrac{|\{e\in \Smc_e\mid \mathsf{c}(e){=}1, e\not\in \mu(h)\}{\cup}\{e\in \Smc_e\mid \mathsf{c}(e){=}0, e\in\mu(h)\}|}{|\Smc|}.$
		}
	\end{equation*}
\end{definition}

We extract information from the black box models by posing membership queries, defined next.

\begin{definition}[Membership Query]
	Let $\bbC=(\examples, \hypothesisSpace, \mu)$ be a concept class. 
	Given a target $\target \in \hypothesisSpace$, let 
	${\sf MQ}_{\bbC,\target}$
	be the {\em oracle} that takes as input an example $\e$
	and outputs its classification  ${\sf c}(\e)$,
	where 
	$\e \in \examples$ is drawn according to the probability
	distribution $\probabilitydistribution$ and 
	${\sf c}(\e) = 1$, if $\e \in \mu(\target)$, and ${\sf c}(\e) = 0$, otherwise. A \emph{membership 
		query} is a call to the oracle ${\sf MQ}_{\bbC,\target}$.  
\end{definition}

One can create a training set using ${\sf MQ}_{\bbC,\target}$ by generating examples according to some probability distribution and asking ${\sf MQ}_{\bbC,\target}$ to classify the examples.

\subsection{Training Error Tolerance and Fidelity}\label{subsec:trainingerror}





A common assumption in PAC learning is the \emph{realizability assumption}, which basically says that the target belongs to the hypothesis space. 
When the target belongs to the hypothesis space, in theory we know there is a hypothesis that is consistent with the training data, that is, the training error is $0$.
This is a rather strong assumption in practice because the hypothesis is usually not fully consistent with the training set~\cite[Section 2.3]{MohriRostamizadehTalwalkar18}. 
On the other extreme, the notion of agnostic PAC learning completely removes the realizability assumption but  comes with other challenges.
Here we keep the realizability assumption but  allow the hypothesis to be inconsistent with the training set. We prove in Theorem~\ref{change_me} that one can allow the hypothesis to misclassify $k$ examples in a training set and give a bound on the minimal number of examples needed for PAC learnability based on $k$.





\begin{theorem}[Sample Size with Training Error]%
	\label{change_me}
	Let \(\hypothesisSpace\) be a finite hypothesis space from a concept class. Let $\delta,\epsilon \in (0, 1/2)$,   let $m,k\in\mathbb{N}$ be such that $m \geq k/\epsilon$ and 
	$$m \;\ge\; 
	\frac{1}{\epsilon}\!\left[\ln\!\left(\tfrac{|\Hmc|(k+1)\epsilon^{1-k}}{\delta}\right) + k\right] + k.
	$$
	Then, for any  $t\in\Hmc$  and for any distribution, $\probabilitydistribution$, 
	with
	probability of at least $1 -\delta$ over the choice of an i.i.d. sample \Smc of size $m$, we
	have that $
	\efn(h,t,\probabilitydistribution)\leq \epsilon$ for every
	$h\in\Hmc$ with $\efn_\Smc(h)\leq k/m$.
\end{theorem}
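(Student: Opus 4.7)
The plan is to extend the classical finite-hypothesis PAC argument (the one that proves a sample bound of roughly $\tfrac{1}{\epsilon}\ln(|\Hmc|/\delta)$ in the realizable, zero-training-error case) to allow up to $k$ training errors. Let
$B_\epsilon := \{h \in \Hmc : \efn(h, t, \probabilitydistribution) > \epsilon\}$
denote the set of \emph{bad} hypotheses. The whole game is to upper-bound the probability of the event ``some $h \in B_\epsilon$ has training error at most $k/m$'' by $\delta$: contrapositively, every $h \in \Hmc$ with $\efn_\Smc(h)\le k/m$ will then satisfy $\efn(h,t,\probabilitydistribution)\le \epsilon$ with probability at least $1-\delta$.

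I would first fix an arbitrary $h \in B_\epsilon$ and analyse the random variable $X_h$ counting the examples in the i.i.d.\ sample $\Smc$ that $h$ misclassifies. Because the sample is drawn i.i.d.\ from $\probabilitydistribution$, $X_h \sim \mathrm{Binomial}(m, p_h)$ with $p_h = \efn(h,t,\probabilitydistribution) > \epsilon$. The event $\efn_\Smc(h)\le k/m$ is exactly $\{X_h \le k\}$, and $\Pr[X_h \le k]$ is monotone decreasing in $p_h$, so it suffices to bound it assuming $p_h = \epsilon$, i.e.\ to bound
$$
\Pr[Y \le k] \;=\; \sum_{i=0}^{k} \binom{m}{i}\epsilon^{i}(1-\epsilon)^{m-i}
$$
for $Y \sim \mathrm{Binomial}(m,\epsilon)$. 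Because the assumption $m\ge k/\epsilon$ gives $k\le m\epsilon$, the mode of the binomial lies at or beyond $k$, so the summands are non-decreasing in $i$ on $\{0,\ldots,k\}$ and hence
$\Pr[Y\le k]\le (k+1)\binom{m}{k}\epsilon^{k}(1-\epsilon)^{m-k}$. I then pass to a closed form via the standard estimates $(1-\epsilon)^{m-k}\le e^{-\epsilon(m-k)}$ and a combinatorial bound on $\binom{m}{k}$.

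Finally, a union bound over the at most $|\Hmc|$ hypotheses in $B_\epsilon$ gives
$$
\Pr\!\bigl[\exists h\in B_\epsilon : \efn_\Smc(h)\le k/m\bigr]
\;\le\; |\Hmc|(k+1)\binom{m}{k}\epsilon^{k}e^{-\epsilon(m-k)}.
$$
Requiring this to be at most $\delta$, taking logarithms, and solving for $m-k$ isolates the claimed inequality. \textbf{The main obstacle} is matching the precise algebraic form $\ln\!\bigl(|\Hmc|(k+1)\epsilon^{1-k}/\delta\bigr)+k$: the naive bound $\binom{m}{k}\le m^{k}$ leaves an $m$-dependence on the right-hand side, so one has to use a sharper estimate (e.g.\ $\binom{m}{k}\le (em/k)^{k}$ combined with Stirling, $k!\ge (k/e)^{k}$) to convert the combinatorial factor into a product of $\epsilon$-powers and an $e^{k}$ constant, absorbing the residual $m$-dependence via the standing assumption $m\ge k/\epsilon$. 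Once the combinatorial factor has the shape $\epsilon^{-k}e^{k}$, the inequality rearranges cleanly to the stated bound; the rest is bookkeeping.
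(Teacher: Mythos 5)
Your plan is essentially the paper's own proof: the same set $\hsp_\epsilon$ of ``bad'' hypotheses, the same identification of the event $\{\efn_\Smc(h)\le k/m\}$ for a fixed bad $h$ with a $\mathrm{Binomial}(m,p_h)$ lower tail, the same bound of that tail by $(k+1)$ times its dominant term (justified via $k\le m\epsilon$), the same estimates $\binom{m}{k}\le (e m/k)^k$ (the paper's \cref{lem:tech}) and $1-x\le e^{-x}$, the same union bound over $|\Hmc|$, and the same final use of the standing assumption $m\ge k/\epsilon$ to trade the residual $k\ln(em/k)$ for $k\ln(e/\epsilon)$ before solving for $m$. The only divergence is at the level of constants: you retain the full factor $\epsilon^{k}$ in the dominant tail term where the paper coarsens each $\epsilon^{j}$ to a single $\epsilon$, so your bookkeeping lands on $m\ge\frac{1}{\epsilon}\bigl[\ln\bigl(|\Hmc|(k+1)/\delta\bigr)+k\bigr]+k$ rather than literally on the stated $\epsilon^{1-k}$ form --- for $k\ge 1$ this is the smaller (hence still sufficient) of the two thresholds, so the difference is inessential.
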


\begin{proof}
	The following proof is a non-trivial adaptation of the proof presented by~\citet{blumer86acm} (see also~\citet{UnderML2014}).
	Let \(\hsp\) be a family of boolean functions, \(t\) a boolean function, \(\probabilitydistribution\) a probability distribution, and \(\efn\) an error function. 
	We define, for \(\epsilon \in (0, 1)\), the following family of hypotheses, called ``bad hypotheses'':
	\begin{align*}
		\hsp_\epsilon \coloneqq \{h \in \hsp \mid \efn(h, t, \probabilitydistribution) > \epsilon\}.
	\end{align*}
	Given a sample $\Smc$, let $h^k_{\samples}$ be a fixed but arbitrary
	element of \(\hypothesisSpace\) such that 
	$e\in\mu(t)$ iff $e\in\mu(h^k_{\samples})$
	does not hold for at most $k$ elements $e$  in $\Smc$. 
	By the realizability 
	assumption,   $h^k_\Smc\in \hypothesisSpace$. 
	In other words,    we   have that 
	there is $h^k_\samples$ such that  \(|\Smc|\cdot\efn_{\samples}(h^k_\samples) \leq k\).
	Let \[M \coloneqq \{ \samples_e \mid \exists h \in \hsp_\epsilon \text{ s.t.\ } \efn_\samples(h) \leq \frac{k}{|\Smc|} \}.\] 

	\noindent   Since \(\efn(h^k_\samples, t, \probabilitydistribution) = \probabilitydistribution(\mu(h^k_\samples) \oplus \mu(t))\)
	we know that \(\efn(h^k_\samples, t,\probabilitydistribution) > \epsilon\) only if 
	\(h^k_\samples \in \hsp_\epsilon\). 
	Therefore, \(\{\samples_e \mid \efn(h^k_\samples, t, \probabilitydistribution) > \epsilon\} \subseteq M\). Hence, \(\probabilitydistribution^m(\{\samples_e \mid \efn(h^k_\samples, t, \probabilitydistribution) > \epsilon\}) \leq \probabilitydistribution^m(M)\).
	By the union bound,
	\begin{align}\label{res:upperbound}
			\probabilitydistribution^m(\{\samples_e \mid \efn(h^k_\samples, t, \probabilitydistribution) > \epsilon\}) \leq \nonumber\\
			\sum_{h \in \hsp_{\epsilon}} \probabilitydistribution^m(\{\samples_e \mid \efn_\Smc(h) \leq \frac{k}{|\Smc|}\}).
	\end{align}

	\noindent We now bound each summand of the second line of Inequality~\ref{res:upperbound}. Let us fix a bad hypothesis $h \in \hsp_\epsilon$. By i.i.d., 
	%
	\begin{align}\label{res:upperboundtwo}
		\probabilitydistribution^m(\{ \samples_e \mid  \efn_\Smc(h)\leq \frac{k}{|\Smc|}\})  =& \\ \nonumber\sum^k_{j=0}  (C^m_j\cdot  \probabilitydistribution(\Emc \backslash (\mu(h) \oplus \mu(t)))^{m-j}&\probabilitydistribution( \mu(h) \oplus \mu(t))^{j} 
		)
		%
	\end{align} 
	where $C^m_j$ is the number of ways to choose $j$ elements from a set of $m$ elements (considering these $m$ elements to be positions in the sequence of length $m$, possibly with repetitions). 
	Eq.~\ref{res:upperboundtwo} expands to the cases in which we have between $m-k$ and $m$ examples where the classification of the hypothesis matches with the classification of the target. 
	For each individual sampling of an element of the training set, 
	\begin{align*}
		\probabilitydistribution(\Emc \backslash (\mu(h) \oplus \mu(t)) 
		%
		=1 - \efn(h,t,\probabilitydistribution)\leq 1- \epsilon. 
	\end{align*}
	Using the inequality $1-x \leq e^{-x}$, we obtain for all $h\in \hypothesisSpace_\epsilon$:
	\begin{align*}
		\probabilitydistribution^m(\{\Smc_e\mid  \efn_\Smc(h)\leq \frac{k}{|\Smc|}\}) = \\ \sum^k_{j=0} (C^m_j\cdot (1 - \epsilon)^{m-j}\epsilon^j)
		\leq  \\ 
		\sum^k_{j=0} (C^m_j\cdot e^{- \epsilon (m-j)}\epsilon^j)
	\end{align*}
	By assumption, $k\leq m\cdot \epsilon$. Then, $k<m/2$ because 
	$\epsilon\in (0,1/2)$. This means that, for all $j\in [1,k]$, we have that $C^m_j\leq C^m_k$.
	Since $C^m_k  \leq (\frac{e \cdot m}{k})^k  $ (see \cref{lem:tech} in the appendix) and $\epsilon \geq \epsilon^j$ for all $j\in [0,k]$, we have that 
	\begin{multline}\label{buildingup}
		\sum^k_{j=0} (C^m_j\cdot e^{- \epsilon (m-j)}\epsilon^j)  \leq \\
		(k+1)\cdot (\frac{e \cdot m }{k})^k
		\cdot e^{- \epsilon (m-k)} \cdot \epsilon 
		\leq \delta
	\end{multline}
	By~\cref{res:upperbound} and \cref{buildingup}, and since $|\hsp_\epsilon|\leq |\Hmc|$ we have that 
	\begin{multline}
		\probabilitydistribution^m(\{\samples_e \mid \efn(h^k_\samples, t, \probabilitydistribution) > \epsilon\}) \leq \\
		|\hsp| \cdot (k+1)\cdot (\frac{e \cdot m }{k})^k
		\cdot e^{- \epsilon (m-k)} \cdot \epsilon 
		\leq \delta
	\end{multline}
	Taking logs and rearranging, we obtain  
	\begin{multline}
		\epsilon (m - k) \;\ge\; \ln\!\left(\tfrac{|\hsp|(k+1)}{\delta}\right) \;+\; k \,\ln\!\left(\tfrac{e m }{k} \right)+ \ln \epsilon.
	\end{multline}
	By assumption \( m \geq {k}/{\epsilon} \). We thus replace   the right occurrence of $m$ by ${k}/{\epsilon}$,  so
	\begin{multline*}    
		\epsilon (m - k) \;\ge\; \ln\!\left(\tfrac{|\hsp|(k+1)}{\delta}\right) \;+\; k \,\ln\!\left(\tfrac{e  }{\epsilon} \right)+ \ln \epsilon.
	\end{multline*}
	Then
	$\epsilon (m - k) \;\ge\; \ln\!\left(\tfrac{|\hsp|(k+1)}{\delta}\right) \;+
	k+ (1-k)\ln (\epsilon)$.
	Thus
	\[
	m \;\ge\; 
	\frac{1}{\epsilon}\!\left[\ln\!\left(\tfrac{|\Hmc|(k+1)}{\delta}\right) + k+ (1-k)\ln (\epsilon)\right] +     k.
	\]
	Finally,
	\[
	m \;\ge\; 
	\frac{1}{\epsilon}\!\left[\ln\!\left(\tfrac{|\Hmc|(k+1)\epsilon^{1-k}}{\delta}\right) + k\right] +    k.
	\]
	This concludes the proof of the theorem.
\end{proof}




We now relate the PAC notion with the notion of fidelity, classically studied within the explainable AI literature.
\begin{definition}[Fidelity and Probabilistic Fidelity]
	The \emph{fidelity} of a post-hoc explanation $h$
	w.r.t. a black box model $t$ and a set of examples $\examples$ is defined as:
	\[{\sf Fid}^\examples(h,t)=\frac{|\{e\in \examples\mid e\in \mu(h)\text{ iff }e\in \mu(t)\}|}{|\examples|}.
	\]
	The \emph{probabilistic fidelity} of a post-hoc explanation $h$
	w.r.t. a black box model $t$, a set of examples $\examples$, and a probability distribution $\probabilitydistribution$ is: 
	\[{\sf Fid}^\examples_\probabilitydistribution(h,t)=\probabilitydistribution(\{e\in \examples\mid e\in \mu(h)\text{ iff }e\in \mu(t)\}).
	\]
	That is,  the probabilistic fidelity is $1-\probabilitydistribution(\mu(h) \oplus \mu(t))$.
\end{definition}

\begin{corollary}[Probabilistic Fidelity]%
	Let \(\hypothesisSpace\) be a finite hypothesis space from a hypothesis class. Let $\delta,\epsilon \in (0, 1/2)$ 
	and  $m,k\in\mathbb{N}$, $m\geq k/\epsilon$. If
	$$m \geq
	\frac{1}{\epsilon}\!\left[\ln\!\left(\tfrac{|\Hmc|(k+1)\epsilon^{1-k}}{\delta}\right) + k\right] + k.
	$$
	then,  for any distribution $\probabilitydistribution$, and for  any $t\in\Hmc$, 
	with
	probability of at least $1 -\delta$ over the choice of an i.i.d. sample $\Smc$ of size $m$, we
	have that $ 
	\ {\sf Fid}^\examples_\probabilitydistribution(h,t) \geq 1-\epsilon$ for every
	$h\in\Hmc$ with $\efn_\Smc(h)\leq k/m$. 
\end{corollary}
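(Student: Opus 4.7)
The plan is to derive the corollary almost directly from Theorem~\ref{change_me}, since the bound on $m$, the hypotheses on $\delta, \epsilon, k$, and the realizability-style setting are identical in both statements. The only work is to translate the ``true error is small'' conclusion of the theorem into the ``probabilistic fidelity is large'' conclusion demanded here.

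First, I would invoke Theorem~\ref{change_me} verbatim: under the stated sample-size bound on $m$, with probability at least $1-\delta$ over the choice of an i.i.d.\ sample $\Smc$ of size $m$, every $h \in \Hmc$ with $\efn_\Smc(h) \le k/m$ satisfies $\efn(h,t,\probabilitydistribution) \le \epsilon$. By Definition~\ref{def:trueerror}, this means $\probabilitydistribution(\mu(h) \oplus \mu(t)) \le \epsilon$ for all such $h$.

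Second, I would unfold the definition of probabilistic fidelity. Observe that for every $e \in \examples$, the condition ``$e \in \mu(h)$ iff $e \in \mu(t)$'' is exactly the negation of ``$e \in \mu(h) \oplus \mu(t)$''. Hence the two sets $\{e \in \examples \mid e \in \mu(h) \text{ iff } e \in \mu(t)\}$ and $\mu(h) \oplus \mu(t)$ partition $\examples$, so their $\probabilitydistribution$-measures sum to $1$. Consequently, as noted in the definition, $\mathsf{Fid}^\examples_\probabilitydistribution(h,t) = 1 - \probabilitydistribution(\mu(h) \oplus \mu(t))$.

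Combining the two observations on the same high-probability event, we get $\mathsf{Fid}^\examples_\probabilitydistribution(h,t) = 1 - \probabilitydistribution(\mu(h) \oplus \mu(t)) \ge 1 - \epsilon$ for every $h \in \Hmc$ with $\efn_\Smc(h) \le k/m$, which is the desired conclusion. There is essentially no technical obstacle here: all the probabilistic work, including the union bound and the sample-size calculation, has already been carried out in the proof of Theorem~\ref{change_me}; the corollary is simply the restatement of that guarantee in the fidelity terminology used in the explainable AI literature.
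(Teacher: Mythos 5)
Your proof is correct and matches the paper's (implicit) argument exactly: the corollary follows by applying Theorem~\ref{change_me} and using the identity ${\sf Fid}^\examples_\probabilitydistribution(h,t) = 1-\probabilitydistribution(\mu(h)\oplus\mu(t)) = 1-\efn(h,t,\probabilitydistribution)$, which the paper already records in the definition of probabilistic fidelity. Nothing is missing.
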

Fidelity is more relevant   than accuracy 
when the goal is to 
approximate   black box models (not ground truth).


	
	



\section{ PAC Decision Trees  }\label{sec:decisiontrees}
\newcommand{\targett}{\ensuremath{T_\star}\xspace}
We first provide basic notions for   defining  decision trees (\cref{sec:decision_tree}).
Then, we present 
a  version of a decision tree algorithm, called 
\(\TopDown\), introduced by~\citeauthor{Kearns1999}~(\citeyear{Kearns1999}) (\cref{subsec:boosting})
and the \Trepan~algorithm (\cref{subsec:trepan}) for building decision trees with PAC guarantees.


\subsection{Decision Trees }\label{sec:decision_tree}

We  provide formal definitions for candidate splits, constraints, trees and  decision trees. 

\begin{definition}[Candidate Split \& Constraint]\label{def:constraint} Let \Fmc be a set of \emph{features}.
	Each feature \feature is associated with a set ${\sf values}(\feature)$ of possible values.
	A \emph{candidate split} is an expression of the form $\feature \ocirc v$ where $f\in \Fmc$, $\ocirc\in \{=,<,\leq,\geq, >\}$, 
	and $v\in {\sf values}(\feature)$.
	We define
	\emph{constraints}   by induction.
	Every candidate split is a constraint.
	If $\phi$ and $\psi$ are constraints  
	then $(\phi \vee\psi)$ and $(\phi \wedge\psi)$
	are constraints.
	Moreover, if $\phi$ is a constraint then $\neg(\phi)$ is 
	also a constraint. We may 
	write $\overline{\phi}$ 
	as a short hand for $\neg(\phi)$.
	When dealing with constraints, we may treat sets and the conjunction of its elements interchangeably (e.g. $a\wedge b$ as $\{a,b\}$ and vice-versa). 
	Moreover, we use $0$
	as an abbreviation for a contradiction $\psi\wedge\neg(\psi)$, with $\psi$ being a candidate split,
	and we write $1$  for the negation of $0$.
\end{definition}

\newcommand{\constraint}{\ensuremath{\psi}\xspace}
\begin{definition}[Tree]\label{def:tree}
	A (binary) \emph{tree} is a set $\tau$ containing $\emptyset$ and  finite binary sequences
	closed under prefix
	(i.e., if e.g. $110\in  \tau $ then
	$11\in  \tau$ and $1\in  \tau $). 
	The elements of $\tau$ are called \emph{nodes}
	and the node $\emptyset$ is called the \emph{root} of $\tau$.  
	The parent of a (non-root) node $n\in \tau$
	is 
	the result of removing the last element
	of $n$. E.g.,  $11$ is the parent of $110$.  
\end{definition}
\newcommand{\decisiontree}{\ensuremath{T}\xspace}

\begin{definition}[Decision Tree]\label{def:decisiontree}
	A (binary) \emph{decision tree}
	is a relation  that maps each 
	node in a tree $\tau$
	to a (possibly empty) set of constraints. 
	We  refer to this relation
	as a set of pairs $(n,\constraint)$ where 
	$n$ is a node in $\tau$ and \constraint is a constraint. 
	The \emph{class} of a node $n$, denoted ${\sf class}(n)$,
	is the last bit of  the sequence $n$. E.g.,
	if $n=110$ then ${\sf class}(n)=0$. 
	%
\end{definition}

Whenever we speak of the nodes of a decision tree we mean 
the domain of the relation.
The nodes of a decision tree 
can be \emph{internal nodes} or \emph{leaf nodes}.
The former are nodes which are parents of other nodes, while the latter are the remaining nodes, that is, those without children. We denote by  
$\mathsf{size\_of}(\decisiontree)$ the number of internal nodes of a decision tree $\decisiontree$.  Also, we write 
$\leaves{\decisiontree}$ for the
set 
of leaf nodes in  \decisiontree.
We write $T(l,\constraint)$ for the tree that  results of transforming a leaf $l$
in a tree $T$ into an internal node (with two children) and associating the constraint \constraint with 
this internal node. In other words, 
$T(l,\constraint):= (T\setminus \{(l,\emptyset)\})\cup \{(l,\constraint),(l0,\emptyset),(l1,\emptyset)\}$.

Given a node $n$ in a decision tree \tree, we write  $\mathsf{constr}_\tree(n)$ for the set of constraints associated with   $n$ in $\tree$ (that is $\{\psi\mid (n,\psi)\in \tree\}$). Also,  we define a function $\mathsf{constr}^\ast_\tree$ that maps each node to all constraints needed to be satisfied to reach the node. We define such function inductively as follows. 
First we set
$\mathsf{constr}^\ast_\tree(\emptyset):=\emptyset$.
Now, 
we set $\mathsf{constr}^\ast_\tree(n0):=\overline{\mathsf{constr}_\tree(n)}\cup \mathsf{constr}^\ast_\tree(n)$ and
$\mathsf{constr}^\ast_\tree(n1):= {\mathsf{constr}_\tree(n)}\cup \mathsf{constr}^\ast_\tree(n)$.

\begin{definition}[Tabular Examples \& Constraint Satisfaction]\label{def:tabularexamples}
	A \emph{tabular example} for a set of features $\Fmc$
	is a 
	tuple $(v_1,\ldots,v_n)$ with each
	$v_i\in {\sf values}(\feature_i)$ being a value for a feature $\feature_i\in \Fmc$ (assume a fixed but arbitrary order for the features in $\Fmc$).
	Then,
	$ (v_1,\ldots,v_n)$ \emph{satisfies} 
	\begin{itemize}
		\item a candidate split 
		$f_i \ocirc v$ if $v_i\ocirc v$, with $\ocirc\in \{=,<,\leq,\geq, >\}$;
		\item a constraint $\neg (\constraint)$
		if it does not satisfy the constraint $\constraint$;
		\item a constraint 
		$(\constraint\wedge \psi)$ if it satisfies 
		$\constraint$ and $\psi$;
		\item a constraint 
		$(\constraint\vee \psi)$ if it satisfies 
		$\constraint$ or $\psi$.
	\end{itemize}
	Given a tabular example $x$ and a constraint $\psi$, we may write $x\models \psi$ to express that $x$ satisfies $\psi$. Moreover, 
	we say that a decision tree \tree \emph{classifies} a tabular example 
	$\tabularexample$ as \emph{positive}, written $T(\tabularexample)=1$, if there is a leaf node $n1$ such that 
	$\tabularexample\models \mathsf{constraints}_\tree(n1)$. Otherwise,   \tree \emph{classifies} the tabular example 
	$\tabularexample$ as \emph{negative}.
\end{definition}

\subsection{The TopDown Decision Tree Algorithm}\label{subsec:boosting}

We     define the concept class of (binary) decision trees.
\begin{definition}\label{def:dtconceptclass}
The \emph{concept class} of (binary) decision trees \(\bbCDT\) is a triple 
(\examples, \hypothesisSpaceDT, $\mu$) where  
$\examples$ is a set of tabular examples (according to Definition~\ref{def:tabularexamples}), 
\hypothesisSpaceDT is a set of (binary) decision trees,
and 
$\mu$ is a function that maps each element $\tree$ of \hypothesisSpaceDT to
$\bigcup_{n1\in \leaves{\tree}}\{x\in\examples\mid x\models {\sf constr}^\ast_\tree(n1) \}$.
\end{definition}

Let \tree be a decision tree (\cref{def:decisiontree}). For each $l \in \mathsf{leaves}(\tree)$, 
$\probabilitydistribution(   \{ x\in\examples | x \models {\sf constr}^\ast_\tree(l) \})$
%
%
is the probability that an example $x\in\examples$ (drawn according to $\probabilitydistribution$) reaches $l$ in \tree. 
The \emph{error of a leaf} $l$   is given by   the probability that $l$ wrongly classifies $x$. 
\emph{The error of a decision tree} \tree is 
the sum of the error of 
all the leaves in   \tree. 
We write $\targett$ 
for the 
\emph{target decision tree}
that represents the black box binary classifier 
we attempt to approximate. 
 \begin{restatable}{proposition}{proptech}
The true error $\efn(\tree,\targett,\probabilitydistribution)$ of a decision tree \tree  is 
	\begin{align*}
		&  \sum_{n0 \in \leaves{\tree}}  \probabilitydistribution(   \{ x | x \models {\sf constr}^\ast_\tree(n0) \}  \cap {\mu(\targett)}  ) + \\
		&  \sum_{n1 \in \leaves{\tree}}  \probabilitydistribution(   \{ x | x \models {\sf constr}^\ast_\tree(n1) \}  \cap \overline{\mu(\targett)} ).
	\end{align*}
\end{restatable}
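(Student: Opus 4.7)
The plan is to unfold the definition of true error via symmetric difference and then use the fact that the leaves of a binary decision tree partition the example space. Concretely, by \cref{def:trueerror} we have $\efn(\tree,\targett,\probabilitydistribution) = \probabilitydistribution(\mu(\tree) \oplus \mu(\targett))$, and the symmetric difference decomposes as $(\mu(\tree) \cap \overline{\mu(\targett)}) \cup (\overline{\mu(\tree)} \cap \mu(\targett))$, a disjoint union. The summand for leaves $n1$ should account for the first piece (examples $\tree$ declares positive but $\targett$ does not), and the summand for leaves $n0$ should account for the second piece (examples reaching a negative leaf of $\tree$ that $\targett$ actually classifies as positive).

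The first step I would carry out is to establish the partition lemma: the family $\{\{x \in \examples \mid x \models \mathsf{constr}^\ast_\tree(l)\}\}_{l \in \leaves{\tree}}$ is a partition of $\examples$. This is proved by induction on the depth of $\tree$, using the inductive definition $\mathsf{constr}^\ast_\tree(n0) = \overline{\mathsf{constr}_\tree(n)} \cup \mathsf{constr}^\ast_\tree(n)$ and $\mathsf{constr}^\ast_\tree(n1) = \mathsf{constr}_\tree(n) \cup \mathsf{constr}^\ast_\tree(n)$ from \cref{def:decisiontree} together with \cref{def:tabularexamples}: at every internal node a tabular example either satisfies or does not satisfy the conjunction associated with it, so it descends into exactly one of the two subtrees.

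Next, I would invoke \cref{def:dtconceptclass} to write $\mu(\tree) = \bigcup_{n1 \in \leaves{\tree}} \{x \mid x \models \mathsf{constr}^\ast_\tree(n1)\}$, and combine with the partition lemma to obtain $\overline{\mu(\tree)} = \bigcup_{n0 \in \leaves{\tree}} \{x \mid x \models \mathsf{constr}^\ast_\tree(n0)\}$. Intersecting with $\mu(\targett)$ and $\overline{\mu(\targett)}$ respectively, and noting that the unions are over pairwise disjoint sets (by the partition lemma), finite additivity of $\probabilitydistribution$ then turns the union into the two sums appearing in the statement, establishing the equality.

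I do not expect a genuine obstacle here: the argument is essentially bookkeeping of a symmetric difference over a partition. The only point requiring slight care is the inductive verification of the partition lemma, since $\mathsf{constr}_\tree(n)$ is a set of constraints to be read conjunctively and $\mathsf{constr}^\ast_\tree$ accumulates constraints along the root-to-node path; once the partition is in hand, the rest is a direct application of the definitions of $\mu$ for decision trees and of $\efn$.
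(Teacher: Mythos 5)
Your proposal is correct and follows essentially the same route as the paper's proof: unfold the true error as $\probabilitydistribution(\mu(\tree)\oplus\mu(\targett))$, split the symmetric difference into $\mu(\tree)\cap\overline{\mu(\targett)}$ and $\overline{\mu(\tree)}\cap\mu(\targett)$, and use the definition of $\mu$ for decision trees together with additivity over the leaves. The only difference is that you propose to prove the partition property of the leaves by induction, whereas the paper simply asserts that each example reaches exactly one leaf; your extra care there is harmless and arguably an improvement.
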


\begin{proof}
The proof is at \arxivurl.
\end{proof}

\begin{algorithm}[t]
\begin{algorithmic}[1]
\small
\STATE $T:=\{(\emptyset,\emptyset)\}$
\WHILE{$\mathsf{size\_of}(T) < \mathsf{size\_limit}$}
\STATE $\Delta_{best} \gets 0$ 
\FOR{each  $(l,\constraint)\in {\sf leaves}(T)\times \setofconstraints$}\label{ln:for}
\STATE $\Delta \gets \efn(\tree,\targett,D)- \efn(\tree(l,\constraint),\targett,D) $
\IF{$\Delta \geq \Delta_{best}$}
\STATE $\Delta_{best} \gets \Delta$
\STATE $(l_{best},\constraint_{best}) \gets (l,\constraint)$
\ENDIF
\ENDFOR
\STATE $T \gets T \cup \{(l_{best}, \constraint_{best})\}$
\ENDWHILE
\STATE Return ($T$)
\end{algorithmic}
\caption{TopDown($\efn(\cdot,\targett,\probabilitydistribution)$, 
$\mathsf{size\_limit}$,
$\setofconstraints$) . 
}
\label{algo:topdown}
\end{algorithm}


\noindent The  \(\TopDown\) algorithm (\cref{algo:topdown}) starts with a root node and builds a decision tree by creating new leaf nodes at each iteration of the `while loop'. These leaves may become parents at the next iteration, and so on.  
The  `for loop' iterates over the space of 
possible leaves and constraints. The idea is to
find a leaf and constraint  in a way that the splitting 
minimizes the error of the tree.

Algorithm~\ref{algo:topdown} is only useful for theoretical purposes since in practice we do not have access to the function  $\efn(\cdot)$ and, furthermore, it would be computationally expensive to make an exhaustive search on the constraints, as done in 
Line~\ref{ln:for}. So in the next section we adapt a decision tree algorithm to use the PAC framework.

\newcommand{\Cmc}{\ensuremath{\Phi}\xspace}
\subsection{The \trepan  Algorithm}
\label{subsec:trepan}
\trepan is a tree induction algorithm that   extracts decision trees from binary classifiers, seen as oracles.
The original motivation behind the development of \trepan is to approximate a neural network by means of a symbolic structure that is more interpretable than a neural network classification model while giving PAC guarantees. 
This approach has also its roots in the context of a wider interest in knowledge extraction from neural networks (see  Introduction).  
\trepan differs from conventional inductive learning algorithms such as CART and C4.5 
because it uses the PAC framework  to estimate the amount of training data and internal nodes in the resulting decision tree. It uses a membership oracle  to classify examples used for training.
Here we consider the binary entropy as the splitting criterion, that is,
\(G(q) = -q \log(q) - (1-q)\log(1-q)\) and $q\in [0,1]$.
The \emph{best (binary) split}, denoted $\bestsplit$, is the candidate split with the lowest entropy  w.r.t a set of samples $\mathbb{S}$.

The pseudo-code for \trepan is shown in Algorithm~\ref{algo:trepan}.  
In more details, the \trepan algorithm works  as follows. It takes $5$ inputs, namely, an oracle ${\sf MQ}_{\bbC,\targett}$ for creating the sample, 
the maximal number of internal nodes (called  $\mathsf{size\_limit}$), an upper bound  $k$ on the number of misclassified examples,
a set of candidate splits $\Cmc$, and
the size of the training set (called $\mathsf{training\_size}$).
At the beginning, 
the algorithm creates an empty node and pushes it to the 
queue (Line~\ref{ln:push}).  
The queue is used to decide which nodes should be evaluated  (Line~\ref{ln:pop}). 
The algorithm calls the oracle ${\sf MQ}_{\bbC,\targett}$ to create a training set (Line~\ref{ln:createtrainingset}). 

\trepan stops the tree extraction process when one of three criteria is met: no more nodes need to be further expanded (the queue is empty), 
a predefined limit of the tree size is reached, or the  training error is below a predefined bound (Line~\ref{ln:while}). In Line~\ref{ln:cutoff}, by the ``misclassification of a node $ni$'' we mean the number of classified examples in $\mathbb{S}$ that reach $n$, but whose class differs from $i$.









\begin{algorithm}[t]
\begin{algorithmic}[1]
\small
\STATE Queue $Q \gets \emptyset$ 
\STATE Decision tree $T \gets \{(\emptyset, \emptyset)\}$ 
\STATE Push the root node $\emptyset$ into $Q$ \label{ln:push}
\STATE Let $\mathbb{S}$ be a  set created with ${\sf training\_size}$  calls to 
${\sf MQ}_{\bbC,\targett}$
\label{ln:createtrainingset}
\WHILE{$Q\neq \emptyset$, $\mathsf{size\_of}(T) < \mathsf{size\_limit}$, and \\
	$\quad\quad\quad$ $\quad\quad\quad$ $\quad\quad\quad$ $\quad\quad$   ${\sf error}_{\mathbb{S}}(T)> k/{\sf training\_size}$\label{ln:while}}
\STATE Pop   $n$ from   $Q$ \label{ln:pop} 
\STATE Use   
$\mathbb{S}$
to find $\bestsplit\in\Cmc$ based on binary entropy
\STATE $T:=(T\setminus\{(n,\emptyset)\})\cup\{(n,{\bestsplit}),(n 0,\emptyset),(n 1,\emptyset) \}$
\FOR{$i\in \{0,1\}$}
%
%
\IF{$ni$ misclassification is  $> k/({\sf size\_limit}+1)$}\label{ln:cutoff}
\STATE Push   $n i$ into $Q$  
\ENDIF
\ENDFOR
\ENDWHILE
\STATE Return ($T$)
\end{algorithmic}
\caption{\trepan{(${\sf MQ}_{\bbC,\targett}$, 
	$\mathsf{size\_limit}$, $k$, $\Cmc$, ${\sf training\_size}$)}
	}
	\label{algo:trepan}
\end{algorithm}











\begin{corollary}
Let ${\sf MQ}_{\bbC,\targett}$ be the membership oracle for the concept class \bbC and the target \targett and let \Cmc be the set of candidate splits induced by \bbC.
Let $T$ be the output of a run of \trepan(${\sf MQ}_{\bbC,\targett}$, 
$\mathsf{size\_limit}$, $k$, $\Cmc$, ${\sf training\_size}$) and let 
$\epsilon,\delta\in (0,1/2)$.
With probability greater than \(1 - \delta\), 
we have that \(\mathsf{error}(\tree,\targett,\probabilitydistribution)\)
%
is  smaller than \(\epsilon\), where 
$\probabilitydistribution$ is the probability distribution used to generate the examples given as input to the membership oracle ${\sf MQ}_{\bbC,\targett}$ if (i)
${\sf training\_size}$ is
as in Theorem~\ref{change_me} and the number of misclassified examples of $T$ w.r.t. the training set is bounded by $k$; or (ii) $\mathsf{size\_limit}$ (the number of internal nodes) 
and ${\sf training\_size}$   are
as in Theorem~1 by~\cite{Kearns1999} and $k=0$. 
%
\end{corollary}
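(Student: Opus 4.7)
The plan is to reduce both (i) and (ii) to existing sample-complexity guarantees by first pinning down the hypothesis space implicitly searched by \trepan, and then checking that the termination conditions in Algorithm~\ref{algo:trepan} match the preconditions of the respective theorems.

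First, I would define the hypothesis space $\Hmc \subseteq \hypothesisSpaceDT$ as the set of all binary decision trees whose internal-node labels are drawn from $\Cmc$ and whose number of internal nodes is at most $\mathsf{size\_limit}$. Since $\Cmc$ is finite and the structural shape of a binary tree of bounded size is also finite, $\Hmc$ is finite. A short counting argument gives an explicit upper bound: the number of shapes of binary trees with at most $\mathsf{size\_limit}$ internal nodes is bounded by a Catalan-like quantity, and each internal node can be labelled in at most $|\Cmc|$ ways, so $|\Hmc| \le c_{\mathsf{size\_limit}} \cdot |\Cmc|^{\mathsf{size\_limit}}$ for a computable constant $c_{\mathsf{size\_limit}}$. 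Any decision tree returned by \trepan lies in $\Hmc$ because the outer while-loop of Algorithm~\ref{algo:trepan} stops splitting as soon as the size limit is reached.

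For case (i), the key observation is the loop condition in Line~\ref{ln:while}: the algorithm continues splitting while ${\sf error}_{\mathbb{S}}(T) > k/{\sf training\_size}$, so upon termination the returned tree $T$ satisfies $\efn_{\Smc}(T) \le k/|\Smc|$ (taking $\Smc = \mathbb{S}$, which is exactly the sample of size ${\sf training\_size}$ drawn in Line~\ref{ln:createtrainingset}). Because ${\sf training\_size}$ was chosen according to Theorem~\ref{change_me} for the finite hypothesis space $\Hmc$, the assumption of that theorem is met. Applying Theorem~\ref{change_me} directly with $\Hmc$ as above, $t = \targett$ (which lies in $\Hmc$ by the realizability assumption built into the concept class $\bbCDT$), and the parameters $\epsilon,\delta$, we conclude that with probability at least $1-\delta$, $\efn(T,\targett,\probabilitydistribution) \le \epsilon$.

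For case (ii), we are in the setting where $k=0$, so the while-loop condition forces the training error to reach $0$ before termination; equivalently, $T$ is consistent with $\mathbb{S}$. Moreover, $\mathsf{size\_limit}$ and ${\sf training\_size}$ are chosen as in Theorem~1 of~\cite{Kearns1999}, whose hypothesis is precisely that the algorithm returns a decision tree consistent with an i.i.d. training sample of the prescribed size and bounded by the prescribed number of internal nodes. Since the sample $\mathbb{S}$ generated in Line~\ref{ln:createtrainingset} via ${\sf MQ}_{\bbC,\targett}$ is i.i.d.\ according to $\probabilitydistribution$, the preconditions of that theorem are satisfied, and it yields $\efn(T,\targett,\probabilitydistribution) < \epsilon$ with probability at least $1-\delta$.

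The main obstacle I anticipate is the careful bookkeeping in identifying $\Hmc$ and verifying that the loop-termination criteria of Algorithm~\ref{algo:trepan} map cleanly onto the conditions ``$\efn_\Smc(h) \le k/m$'' of Theorem~\ref{change_me} and ``consistent hypothesis of bounded size'' of Kearns' theorem; once this is done, both parts are immediate invocations of the respective sample-complexity bounds.
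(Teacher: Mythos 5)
Your proposal is correct and takes essentially the same route as the paper, whose proof is a one-line appeal to Theorem~\ref{change_me} for case (i) and to Theorem~1 of Kearns and Mansour for case (ii); your explicit identification of the finite hypothesis space and the matching of preconditions is just the bookkeeping the paper leaves implicit. One small inaccuracy worth noting: termination of the while-loop in Algorithm~\ref{algo:trepan} does \emph{not} by itself imply $\efn_{\Smc}(T)\le k/|\Smc|$ (the loop also exits when the queue empties or the size limit is reached), but this is harmless here because case (i) of the corollary explicitly assumes the bound on the number of misclassified training examples as a hypothesis rather than deriving it from the algorithm.
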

\begin{proof}
This result is a consequence of 
Theorem~\ref{change_me} and
in Theorem 1 by~\citeauthor{Kearns1999}~(\citeyear{Kearns1999})). 
%
%
%
%
%
%
%
%
%
%
\end{proof}

{




\section{Experiments}\label{sec:experiments}


We now describe our experiments using \trepac. In order to showcase the performance of   
\trepan as a tool for explaining  the behaviour  black box models, we consider the recent case study presented by~\citeauthor{BLUM2023109026}~\citeyearpar{BLUM2023109026}, where the authors extract rules indicating occupational gender biases in 
BERT-based language models. 
In the mentioned reference, the authors
use an adaptation of Angluin's Horn algorithm to extract rules. The authors \emph{do not} provide PAC guarantees for their results (although this could have been obtained in their work by running the algorithm without limiting the number of simulated equivalence queries). 
Here we extract decision trees and analyse the results in light of the PAC framework.
Although BERT-based models~\cite{DBLP:conf/naacl/DevlinCLT19,DBLP:journals/corr/abs-1907-11692} are not binary classifiers, the case study focuses on pronoun prediction (`she' or `he').\footnote{The non-binary pronoun `they' did not receive   significant prediction values by the models, see Table~2 in \cite{BLUM2023109026}.} We describe the case study by~\citet{BLUM2023109026} (\cref{subsec:studycs})
and also our experimental results (\cref{subsec:results}).
\subsection{The Occupational Gender Bias Case Study}\label{subsec:studycs}
In the case study by~\citet
{BLUM2023109026}, the authors consider $4$
language models:  BERT-base-cased, BERT-large-cased~\cite{DBLP:conf/naacl/DevlinCLT19}, RoBERTa-base, and RoBERTa-large~\cite{DBLP:journals/corr/abs-1907-11692}. These language models are trained to fill up
a blank part (called `mask') of a sentence with a token.
The authors of the case study use the following   kinds of features  
to create sentences\footnote{They also include `unknown value' features, relevant to the Horn algorithm but unnecessary for our work.}:
\begin{itemize}
\item \textbf{birth period:} before 1875, between 1875 and 1925, between 1925 and 1951, between 1951 and 1970, after 1970; 
\item \textbf{location:} North America, Africa, Europe, Asia, South America, Oceania, Eurasia, Americas, Australia;
\item \textbf{occupation:} nurse, fashion designer, dancer, footballer, industrialist, boxer, singer, violinist.
\end{itemize}

In total, this gives  $22$ features which can be used to create sentences with one value for each kind of feature. 
The sentences are of the form:
\texttt{<mask> was born [birth period] in [location] and is a/an [occupation]}, where the task of a BERT-based model is to predict \texttt{<mask>}.
For instance, one possible sentence in this template would be: \texttt{<mask> was born after 1970 in Africa and is a singer.}
Since there are $5$ birth periods, $9$ locations, and $8$ occupations in the study, one can form $5\cdot 9\cdot 8=360$ sentences using the template.
Even though the model could fill the mask with any token, the authors indicate  that the most likely completions are one of the two pronouns `she' or `he', which works as a binary classification task.
Each sentence is mapped to a
binary vector using a lookup table (see at \arxivurl).
The binary vector is a one-hot encoding of the features. Our sentence in the template above would be mapped to the binary vector:
\[ [0,0,0,0,1,0,1,0,0,0,0,0,0,0,0,0,0,0,0,0,1,0 ]\]
If the BERT-based model predicts the pronoun `she' then this would correspond to classifying the example with $0$ and if it would predict `he' then the classification would be $1$.

\begin{table}[!t]
\begingroup
\setlength{\tabcolsep}{4pt}
\centering
\small	
\begin{tabular}{|c||c ||c| c|c|c|}
\hline
$c$& ${ n}$ &   $m:k=0$ &  $m:k=5$ & $m:k=10$ & $m:k=15$\\
\hline
$0.04$ & $3$ & $124$ & $204$ & $277$ & $349$\\
$0.06$  & $6$ & $201$ & $280$ & $353$ & $425$\\
$0.08$  & $10$ & $257$ & $336$ & $409$ & $481$ \\
$0.1$  & $18$ & $321$ & $401$ & $474$ & $546$\\
\hline 
\end{tabular}
\caption{Sample size $m$  calculated c.f.  \cref{change_me} with $\delta = 0.1$, $\epsilon=0.2$, and at most $k$   misclassified examples. Values are rounded and calculated with the size of the hypothesis space   being the number of all decision trees with ${n}$ internal nodes. 
$n$ is estimated c.f.~Theorem 1~\cite{Kearns1999} 
with constant $c = [0.04, 0.06, 0.08, 0.1]$ and $\gamma=0.5-\epsilon$.
}
\label{tab:sample}
\endgroup
\end{table}

\subsection{Results}\label{subsec:results}
We now present   our experimental results.
The idea behind our experiments is to 
(i)     check the \emph{feasibility} of the case study~\cite{BLUM2023109026} in our setting, which gives PAC guarantees, 
(ii) check whether we obtain results that are \emph{consistent} with their results, and also 
(iii) check whether the \emph{format} of decision trees provides additional useful information.
Experiments were run on a MacBook Pro M2 16GB. Each experiment was run $10$ times, we present average values.

\begin{figure}[!t]
\captionsetup[subfigure]{labelformat=empty}
\centering
\subfloat[]
{
	\includegraphics[width=0.49\columnwidth]{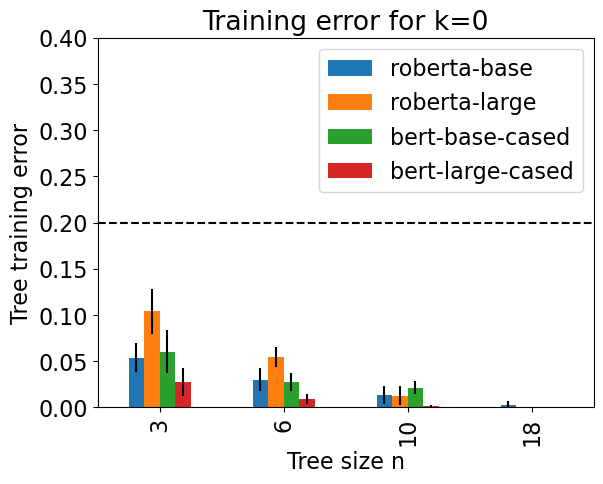}
	\includegraphics[width=0.49\columnwidth]{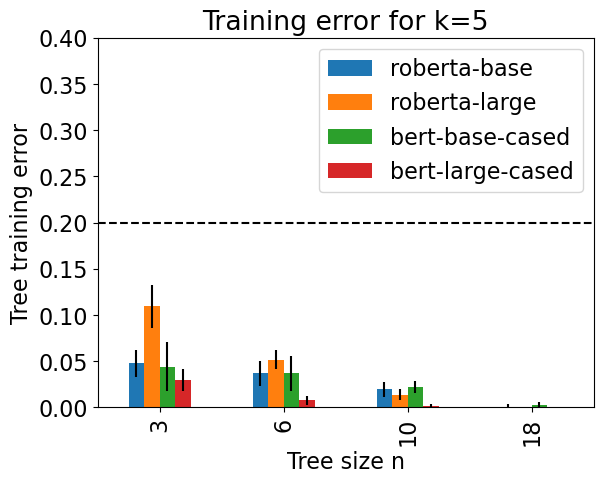}
	
}
\qquad
\subfloat[]
{
	\includegraphics[width=0.49\columnwidth]
	{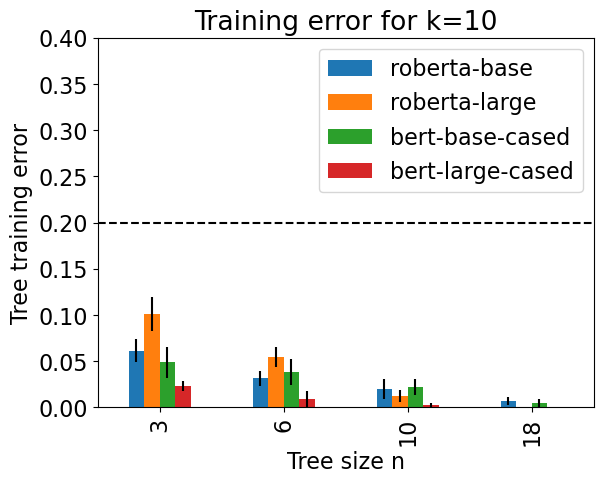}
	\includegraphics[width=0.49\columnwidth]
	{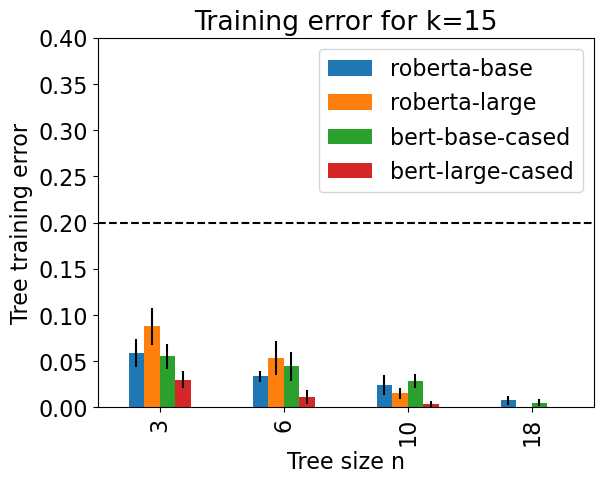}
	
}
\caption{\trepac training error for $k=[0,5,10,15]$ and $n=[3,6,10,18]$. 
	Increasing the number of internal nodes reduces the training error. The horizontal dotted line correspond to $\epsilon=0.2$. 
	See also \cref{tab:sample}.
}
\label{fig:training_error}
\end{figure}

\begin{figure}[!t]
\captionsetup[subfigure]{labelformat=empty}
\centering
	\subfloat[]
	{
		\includegraphics[width=0.49\columnwidth]{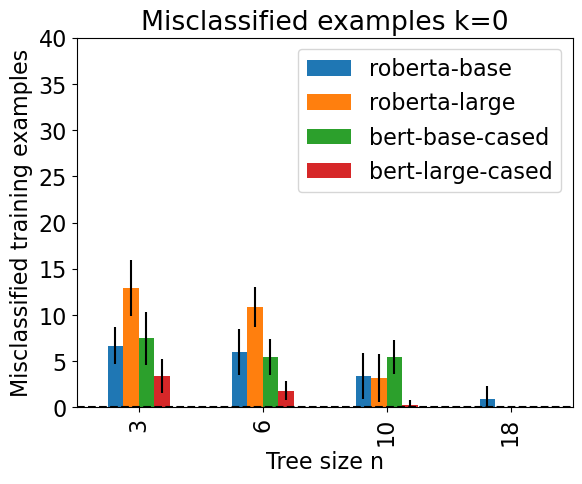}
		\includegraphics[width=0.49\columnwidth]{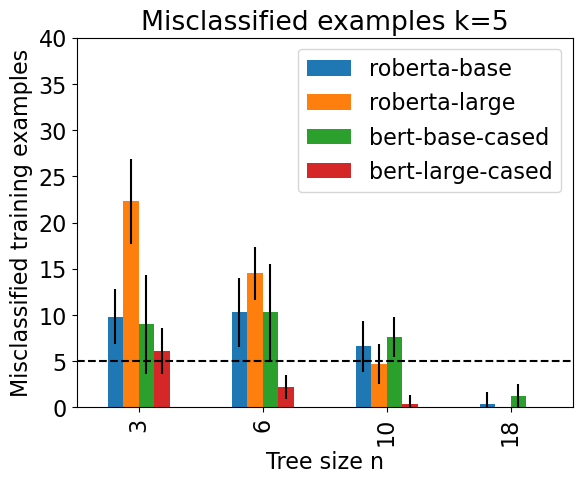}
		
	}
	\qquad
	\subfloat[]
	{   \includegraphics[width=0.49\columnwidth]
		{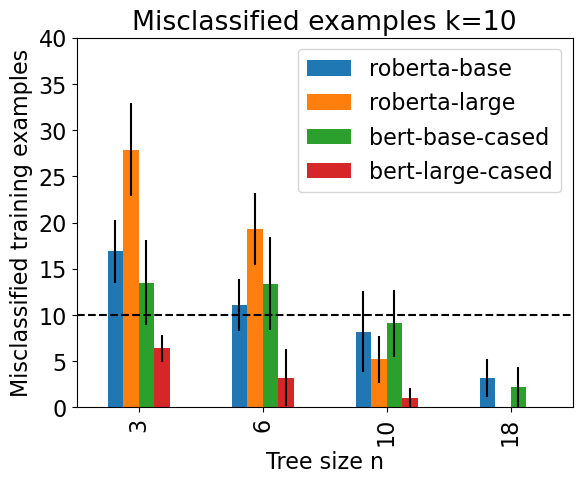}
		\includegraphics[width=0.49\columnwidth]
		{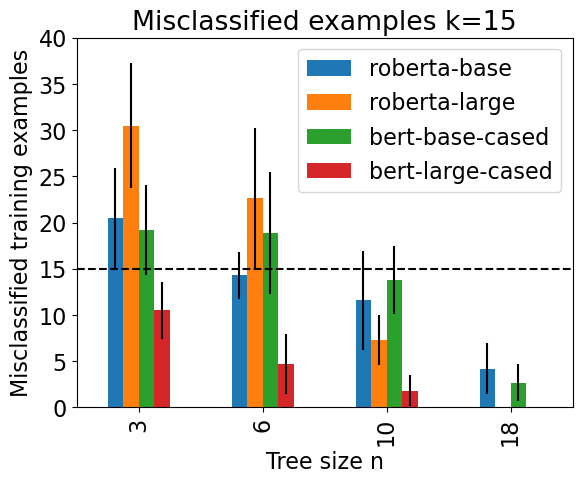}
		
	}
	\caption{\trepac misclassified training examples for $k=[0,5,10,15]$ and $n=[3,6,10,18]$. The number of misclassified examples  is at most $k$ (horizontal dotted line) when an appropriate tree size is chosen. 
	}
	\label{fig:mis_examples_training}
\end{figure}

\paragraph{Feasibility} We ran several experiments to check the practical feasibility of the approach using the PAC framework. The  binary data representation is suitable for the case of learning surrogate decision trees from the BERT-based language models.
We use \cref{change_me} to calculate $m$ considering  the hypothesis space to be the set of all
decision trees with $n$ internal nodes.
The size of the hypothesis space was constrained to the number of all decision trees with $n$ internal nodes, in particular $|n|^{|F|}$, where $|F|$ is number of features which in our case is $22$. The corresponding sample sizes ($m$) for values of $k=[0,5,10,15]$ and $n = [3, 6, 10, 18]$ are shown in Table~\ref{tab:sample}. 
The range values for $k$ and $n$ were chosen so that the number of training examples is low in comparison with the total number of examples. 
When the number of accepted errors $k$ is larger, the number of hypotheses which can be used increases. This also raises the chance of selecting a bad hypothesis. To compensate for this, a larger number of consistent examples is required.  
%
For each language model we extracted $16$ decision trees varying $n$ and $k$ according to Table~\ref{tab:sample}.  
For each decision tree we measured the training error (Figure~\ref{fig:training_error}), as well as 
the number of misclassified training examples (Figure~\ref{fig:mis_examples_training}).  

As expected, both the training error and the number of misclassified training examples decrease whenever the size of the tree increases.  Figure~\ref{fig:training_error} shows the training error of the BERT-based language models. For all models, this error is less than the upper bound $\epsilon=0.2$, chosen for determining the number of examples needed for PAC learnability based on $k$. This result is consistent with our \cref{change_me}. Figure~\ref{fig:mis_examples_training} shows the sensitivity of the results w.r.t. the number of accepted errors $k$. The numbers of samples $m$ calculated in Table~\ref{tab:sample} ensure that the number of misclassified training examples is at most $k=0$ when the tree size is at least $n = 18$ for all models. For other values $k=[5,10,15]$, $m$ ensures that the number of misclassified training examples is at most $k$ when an appropriate tree size is chosen. To have at most $k=5$ misclassified examples, the tree size needs to have at least $n=10$ internal nodes. For higher values of $k$, trees with fewer nodes suffice. These results support the practical applicability of \cref{change_me}. In our experiments, the training errors of the decision trees extracted from RoBERTa-base and RoBERTa-large are higher than those from BERT-base and BERT-large. This suggests that models trained on larger corpora (which is the case for the RoBERTa models) may be more complex,
which translates into more internal nodes.  
We provide additional supporting information regarding the error at \arxivurl.
Crucially, the time for running the algorithm is negligible. The time for generating training sets using BERT-based models varied from approximately $85$ to $510$ seconds  (more at \arxivurl). 


\begin{figure}[!t]
\centering
\includegraphics[width=0.31\textwidth]{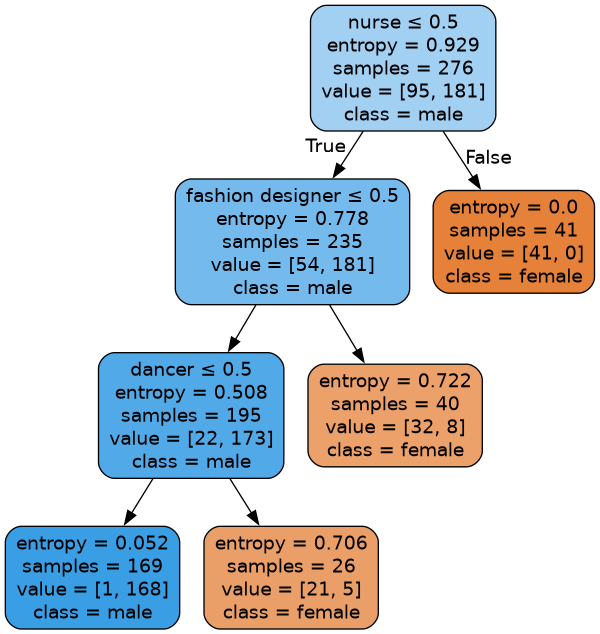}

\caption{Surrogate tree for RoBERTa-base. The tree  is extracted with $n=3$ and $m=277$   ($k=10$), see \cref{tab:sample}.}
\label{fig:enter-label}
\end{figure}

\paragraph{Consistency and Format}
Our results are consistent with those obtained by~\citeauthor{BLUM2023109026}~\citeyearpar{BLUM2023109026}, indicating occupational gender bias in these models (see~\cref{fig:enter-label}).
Regarding the format,
the decision trees  facilitate the visualization of 
which features are most relevant for pronoun prediction, with the occupations being the most relevant ones. The decision trees can also  be used to extract rules. 
We report the frequency of common rules in \cref{tbl:rules}. The `nurse' occupation appears as the most biased occupation towards female.

\begin{table}[t]
\centering
\begin{tabular}{ll}
	Rule & Frequency \\
	\hline \hline
	nurse $\rightarrow$ female  & $0.50$ \\
	industrialist $\rightarrow$ male & $0.18$ \\
	footballer $\rightarrow$ male & $0.16$ \\
	boxer $\rightarrow$ male & $0.12$ \\
	fashion\_designer $\rightarrow$ female & $0.02$ \\
	dancer $\rightarrow$ female & $0.02$ \\
	%
\end{tabular}
\caption{Most frequent rules extracted from the surrogate decision trees of the BERT-based  models. 
}
\label{tbl:rules}
\end{table}





\section{Conclusion and Future Work}\label{sec:conclusion}

We motivate and study   PAC guarantees for decision trees extracted from black box binary classifiers. 
Since the training error is rarely zero, we provide a non-trivial proof for estimating the sample size for finite hypothesis when the training error is not zero. Our proof is applicable to any concept class with finite hypothesis space.
We then  formalize the relevant definitions for decision trees.
Finally, we perform experimental results with decision trees on BERT-based models and analyse the results in light of the PAC framework. 

In future work, it would be interesting to extend the experiments with  other case studies and consider multi-classification tasks. Also, one could  study a sampling strategy that satisfies the constraints of the leaves, as in Trepan~\cite{Craven1995}, within the PAC framework. This seems promising from a practical point of view. However, the analysis would be complex because randomly selecting examples that satisfy the leaves would change the sample space, necessitating an investigation into how the PAC framework could be adapted for this scenario.

\section*{Acknowledgments}
We thank the anonymous reviewers for their comments. Ozaki is supported by the Research Council of Norway, project (316022).  This work was also supported by the Research Council of Norway, Integreat - Norwegian Centre for knowledge-driven machine learning (332645). Confalonieri acknowledges funding support from the ‘NeuroXAI’ project (BIRD231830).

\bibliographystyle{plainnat}
\bibliography{aaai25.bib}


\appendix

\section{Proof of \cref{change_me}}
\begin{lemma}\label{lem:tech}
For all $m\in \mathbb{N}$ and $j\in [1,m]$, we have that    $\binom{m}{j} \leq (\frac{e \cdot m}{j})^j$. 
\end{lemma}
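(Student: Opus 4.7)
The plan is to bound the binomial coefficient by a simple power and then bound the factorial from below using the exponential series, combining the two to get the claimed inequality. This is a standard textbook bound, so the main task is to present the two elementary estimates cleanly rather than to discover anything nonobvious.

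First, I would write $\binom{m}{j} = \frac{m(m-1)\cdots(m-j+1)}{j!}$ and observe that the numerator is a product of $j$ factors, each at most $m$. This gives the upper bound
\[
\binom{m}{j} \;\le\; \frac{m^j}{j!}.
\]
Next, I would obtain a matching lower bound on $j!$ from the Taylor series of the exponential function: since $e^j = \sum_{i\ge 0} j^i/i! \ge j^j/j!$ (the $i=j$ term alone), we conclude $j! \ge (j/e)^j$. Plugging this into the previous display yields
\[
\binom{m}{j} \;\le\; \frac{m^j}{(j/e)^j} \;=\; \left(\frac{e\cdot m}{j}\right)^j,
\]
which is exactly the claim.

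There is no real obstacle here; the only subtlety is the lower bound on $j!$, which uses the single-term trick on the exponential series rather than, say, Stirling's formula, keeping the argument elementary and tight enough for the inequality as stated. The hypothesis $j\in[1,m]$ is only needed implicitly to ensure the binomial coefficient and the right-hand side are meaningful (for $j=0$ both sides are handled by the convention $0^0=1$, but this case is excluded).
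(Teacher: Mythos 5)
Your proof is correct and follows the same overall decomposition as the paper: first bound $\binom{m}{j}\le m^j/j!$ by noting the numerator is a product of $j$ factors each at most $m$, then lower-bound $j!$ by $(j/e)^j$ and combine. The only difference lies in how that factorial bound is obtained. The paper invokes the Stirling-type inequality $\ln(j!) > j\ln j - j$, citing a probability textbook, and exponentiates; you instead extract the single term $i=j$ from the series $e^j=\sum_{i\ge 0} j^i/i!$ to get $e^j\ge j^j/j!$, i.e.\ $j!\ge (j/e)^j$. Your route is entirely self-contained and arguably cleaner, since it needs nothing beyond the Taylor expansion of $e^x$ with nonnegative terms; the paper's route outsources the same inequality to a cited reference. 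Both yield the bound needed (your version is non-strict where the paper's is strict, which is immaterial for the non-strict claim of the lemma), and your closing remark about the role of the hypothesis $j\in[1,m]$ is accurate.
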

\begin{proof}
This lemma is based on Stirling's formula~\cite{stirlingprobbook}. 
We first note the following.
\begin{multline}
	\label{eq:prod}
	\binom{m}{j} \;=\; \frac{m!}{j!(m-j)!}
	\;=\; \\ \frac{m(m-1)\cdots(m-j+1)(m-j)!}{j!(m-j)!}\;=\; \\ \frac{m(m-1)\cdots(m-j+1)}{j!}
	\;\le\; \frac{m^j}{j!}
\end{multline}
\noindent
Since~\cite[page 52, Eq. (9.5)]{stirlingprobbook}:
\begin{equation*}
	\ln (j!)> j \ \ln (j) - j,
\end{equation*}
we have
\begin{equation*}
	e^{\ln (j!)}> e^{j \ \ln (j) - j}.
\end{equation*}
So
\begin{equation}
	\label{eq:stirling}
	j!> j^j /e^j = (j/e)^j.
\end{equation}
Combining \cref{eq:prod} and \cref{eq:stirling} we have that
\begin{align*}
	\binom{m}{j}
	\;\le\; \frac{m^j}{j!}
	\;\le\; \frac{m^j}{(j/e)^j}
	\;=\; \Big(\frac{e\,m}{j}\Big)^{\!j}.
\end{align*}
\end{proof}

\section{Proof of Proposition 16}\label{sec:proofprop16}

The true error is the probability of misclassification, which happens when the classification of an example does not match with the classification of the decision tree. 
\proptech*
\begin{proof}
By Definition~\ref{def:trueerror}, 
\[\efn(\tree,\target,\probabilitydistribution)=\probabilitydistribution(\mu(\target)\oplus \mu(\tree)).\]
We have that $x\in\mu(\target)\oplus \mu(\tree)$ iff either
$x\in\mu(\target)\setminus \mu(\tree)$ or 
$x\in\mu(\tree)\setminus \mu(\target)$. 
The latter 
is equal to
$\mu(\tree)\cap \overline{\mu(\target)}$. 
Also, by \cref{def:dtconceptclass},
\[\mu(\tree)=\bigcup_{n1\in \leaves{\tree}}\{x\in\examples\mid x\models {\sf constr}^\ast_\tree(n1) \}.\]
Since each example reaches exactly one leaf node,
\[\probabilitydistribution(\mu(\tree))=\sum_{n1 \in \leaves{\tree}} \probabilitydistribution( \{x\in\examples\mid x\models {\sf constr}^\ast_\tree(n1) \}).\]
So $\probabilitydistribution(\mu(\tree)\cap \overline{\mu(\target)})$ is equal to 
\[\sum_{n1 \in \leaves{\tree}}  \probabilitydistribution( \{   \tabularexample | \tabularexample \models {\sf constr}^\ast_\tree(n1) \}  \cap \overline{\mu(\target)} )  .\]
Similarly $\probabilitydistribution(\mu(\target)\setminus \mu(\tree))= \probabilitydistribution(\overline{\mu(\tree)}\cap\mu(\target))$ and, by the same argument as above, it is equal to 
\[\sum_{n0 \in \leaves{\tree}}  \probabilitydistribution(   \{ x | x \models {\sf constr}^\ast_\tree(n0) \}  \cap {\mu(\target)}  ).\]
\end{proof}

\section{Lookup Table}\label{sec:lookup}
\cref{tbl:lookup} contains the mapping between features and vector positions, as defined by the authors of the occupational gender bias case study~\cite
{BLUM2023109026}.
\begin{table}[!t]
\caption{Lookup table. For each kind of feature at most one of the positions can be chosen (setting its value to $1$).}
\centering
\begin{tabular}{ll}
	\hline \hline
	position & value \\
	\hline 
	time period & $~$\\
	0 & before 1875\\
	1 & between 1825 and 1925\\
	2 & between 1925 and 1951\\
	3 & between 1951 and 1970\\
	4 & after 1970\\
	\hline
	continent & $~$\\
	5 & North America\\
	6 & Africa\\
	7 & Europe\\
	8 & Asia\\
	9 & South America\\
	10 & Oceania\\
	11 & Eurasia\\
	12 & Americas\\
	13 & Australia\\
	\hline 
	occupation & $~$\\
	14 & nurse\\
	15 & fashion designer\\
	16 & dancer\\
	17 & footballer\\
	18 & industrialist\\
	19 & boxer\\
	20 & singer\\
	21 & violinist\\
\end{tabular}
\label{tbl:lookup}
\end{table}

\begin{figure}[!t]
\captionsetup[subfigure]{labelformat=empty}
\centering
	\subfloat[]
	{
			\includegraphics[width=0.5\columnwidth]{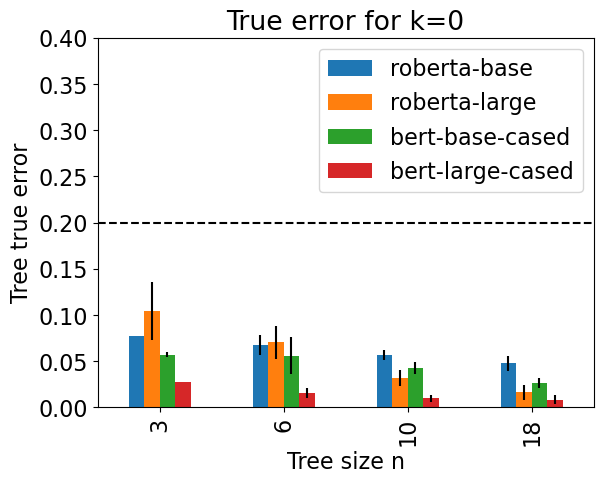}
			\includegraphics[width=0.5\columnwidth]{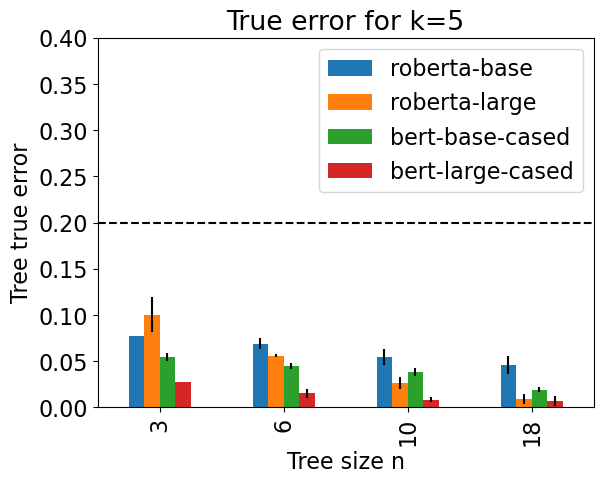}
		}
		\qquad
		\subfloat[]
		{
				\includegraphics[width=0.5\columnwidth]
				{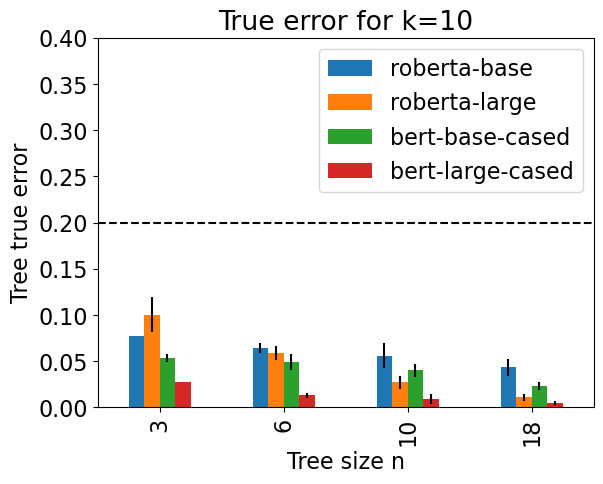}
				\includegraphics[width=0.5\columnwidth]
				{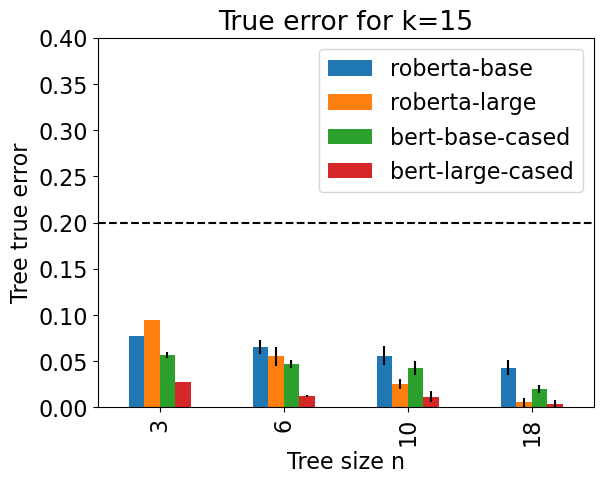}
			}
			\caption{\trepac true error for $k=[0,5,10,15]$ and $n=[3,6,10,18]$. Increasing the number of nodes decreases the true error. 
			}
			\label{fig:true_error}
		\end{figure}
		\begin{figure}[!t]
			\captionsetup[subfigure]{labelformat=empty}
			\centering
				\subfloat[]
				{
					\includegraphics[width=0.5\columnwidth]{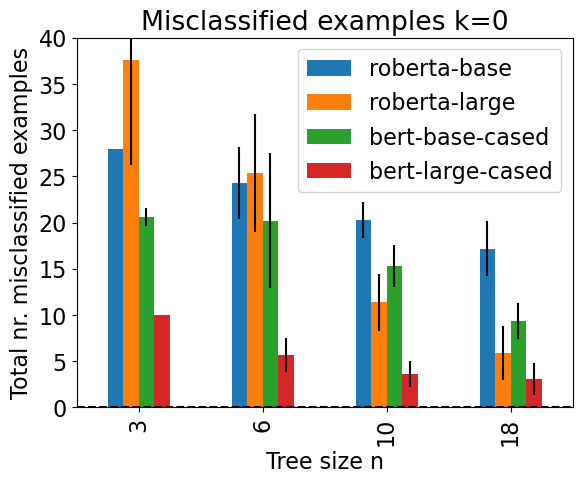}
					\includegraphics[width=0.5\columnwidth]{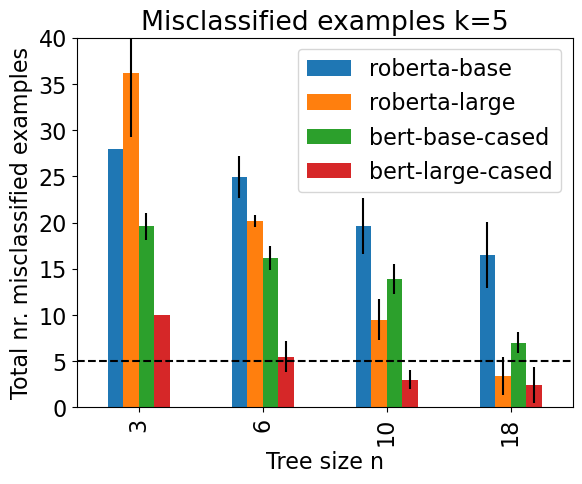}
				}
				\qquad
				\subfloat[]
				{
					\includegraphics[width=0.5\columnwidth]
					{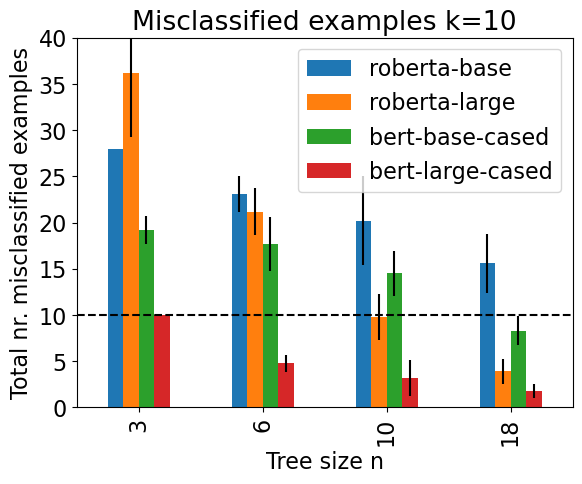}
					\includegraphics[width=0.5\columnwidth]
					{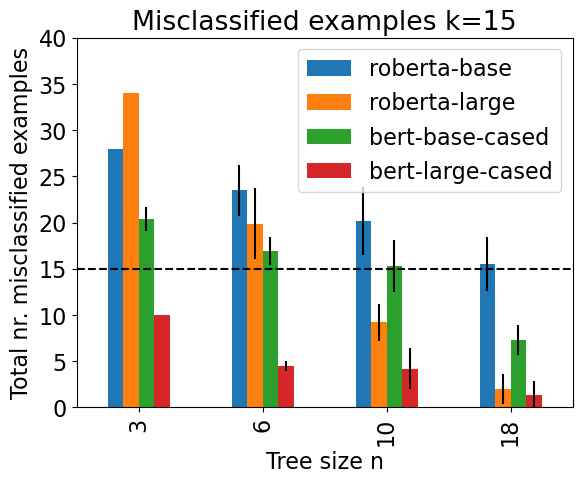}
				}
				\caption{\trepac total number of misclassified examples for $k=[0,5,10,15]$ and $n=[3,6,10,18]$.  Increasing the number of internal nodes of the tree reduces the total number of misclassified examples.
				}
				\label{fig:mis_examples_true}
			\end{figure}
			
			\begin{table}[!t]
				\begingroup
				\setlength{\tabcolsep}{4pt} 
				\caption{Mean training error, true error, misclassified training and true examples, and query/tree extraction times using \trepac for $\epsilon=0.2$, $\delta=0.1$, $n = 10$ and $k=10$ ($m=409$). The binary entropy is used as criterion for the selection of the best splits (std. dev. are in parenthesis). 
				}
				\small	
				\centering
				\begin{tabular}{llll}
					
					\hline
					Model & Measure & \trepac  \\
					\hline \hline
					\multirow{6}{*}{RoBERTa-base} 
					& Training error & 0.020 (0.011) \\
					& Training misclassified ex. & 8.2 (4.4) \\
					& True error & 0.056 (0.013) \\
					& True misclassified ex. & 20.2 (4.8) \\
					& Querying time & 651.52 (10.96) \\
					& Tree extraction time & 0.00070 (0.00002) \\
					
					\multirow{6}{*}{RoBERTa-large}
					& Training error & 0.013 (0.006) \\
					& Training misclassified ex. & 5.2 (2.6) \\
					& True error & 0.027 (0.007) \\
					& True misclassified ex. & 9.8 (2.5) \\
					& Querying time & 1288.38 (13.52) \\
					& Tree extraction time & 0.00072 (0.00002) \\
					
					\multirow{6}{*}{BERT-base-cased}
					& Training error & 0.022 (0.009) \\
					& Training misclassified ex. & 9.1 (3.7) \\
					& True error & 0.040 (0.007) \\
					& True misclassified ex. & 14.5 (2.5) \\
					& Querying time & 654.61 (7.97) \\
					& Tree extraction time & 0.00070 (0.00003) \\
					
					\multirow{6}{*}{BERT-large-cased}
					& Training error & 0.002 (0.003) \\
					& Training misclassified ex. & 1.0 (1.1) \\
					& True error & 0.009 (0.005) \\
					& True misclassified ex. & 3.2 (1.9) \\
					& Querying time & 1275.51 (16.70) \\
					& Tree extraction time & 0.00070 (0.00003) \\
					
					& $~$ & $~$ & $~$ \\
				\end{tabular}
				\label{tbl:results}
				\endgroup
			\end{table}
			
			\section{True Error and Misclassified Examples }\label{sec:addtables}
			The true error is usually impossible to calculate since it requires knowledge about the classification of all examples. However, in our experiments, the number of possible examples is finite, allowing us to calculate the true error and the total number of misclassified examples. This was done with the purpose of further confirming the results presented in the main text and as a sanity check.
			Figures~\ref{fig:true_error} and~\ref{fig:mis_examples_true} show results regarding the true error and the total number of misclassified examples for all models for $k=[0,5,10,15]$ and $n=[3,6,10,18]$, respectively. Table~\ref{tbl:results} shows all metrics for trees distilled using \trepac for $\epsilon=0.2$, $\delta=0.1$, $n = 6$ and $k=10$ ($m=409$).

			\section{Time for Probing LMs}\label{sec:timesprob}
			
			Table~5 reports the time taken for querying the language models.  We used the huggingface API.\footnote{\url{https://huggingface.co}}
			\begin{table*}[!t]
				\begingroup
				\small
				\setlength{\tabcolsep}{10pt} 
				\caption{Mean values of the time (in secs) taken for probing the BERT-based models (std. dev. are shown in parenthesis). 
				}
				
				\small
				\begin{tabular}{ccllll}
					\hline 
					$k$ & $m$ & RoBERTa-base & RoBERTa-large & BERT-base-cased  & BERT-large-cased \\
					\hline \hline
					
					\multirow{4}{*}{$k=0$}  
					& 124 & 194.06 (3.79) & 389.40 (11.08) & 199.80 (3.20) & 385.20 (9.02) \\
					& 201 & 323.52 (9.56) & 626.87 (10.52) & 320.24 (6.15) & 627.91 (9.58) \\
					& 257 & 408.87 (10.04) & 795.52 (8.66) & 408.42 (10.14) & 807.45 (17.38) \\
					& 321 & 517.63 (30.02) & 1002.42 (5.42) & 516.04 (7.44) & 1001.65 (15.75) \\
					\hline
					
					\multirow{4}{*}{$k=5$}  
					& 204 & 322.88 (6.30) & 633.72 (15.85) & 324.77 (9.16) & 642.10 (10.35) \\
					& 280 & 449.09 (10.68) & 871.86 (10.95) & 442.39 (9.03) & 877.34 (17.12) \\
					& 336 & 533.46 (7.38) & 1044.84 (14.01) & 538.86 (7.41) & 1048.94 (12.99) \\
					& 401 & 641.22 (8.07) & 1251.11 (18.04) & 640.29 (7.58) & 1260.73 (20.11) \\
					\hline
					
					\multirow{4}{*}{$k=10$}  
					& 277 & 446.62 (7.05) & 893.27 (69.46) & 437.94 (7.62) & 863.31 (15.34) \\
					& 353 & 561.80 (7.90) & 1106.79 (10.59) & 564.47 (10.66) & 1111.26 (15.22) \\
					& 409 & 651.52 (10.96) & 1288.38 (13.52) & 654.61 (7.97) & 1275.51 (16.70) \\
					& 474 & 750.88 (11.76) & 1483.87 (21.48) & 751.78 (7.20) & 1473.59 (21.65) \\
					\hline
					
					\multirow{4}{*}{$k=15$}  
					& 349 & 561.15 (10.98) & 1091.17 (13.01) & 552.31 (5.30) & 1087.28 (7.38) \\
					& 425 & 676.90 (10.59) & 1320.29 (10.90) & 677.72 (5.45) & 1338.88 (10.68) \\
					& 481 & 761.37 (9.46) & 1517.56 (18.20) & 768.32 (12.47) & 1505.58 (11.20) \\
					& 546 & 866.33 (9.18) & 1706.60 (10.81) & 873.26 (11.76) & 1707.74 (21.86) \\
					\hline
				\end{tabular}
				\endgroup
				
				\label{tbl:resultstime}
			\end{table*}
			
			\section{Decision Tree}\label{sec:dectreefull}
			Figure~\ref{fig:surrogate_trees} shows two decision trees extracted from RoBERTa-base. 
			\begin{figure*}[!t]
				\centering
					\subfloat[]
					{
						\includegraphics[width=0.6\textwidth]{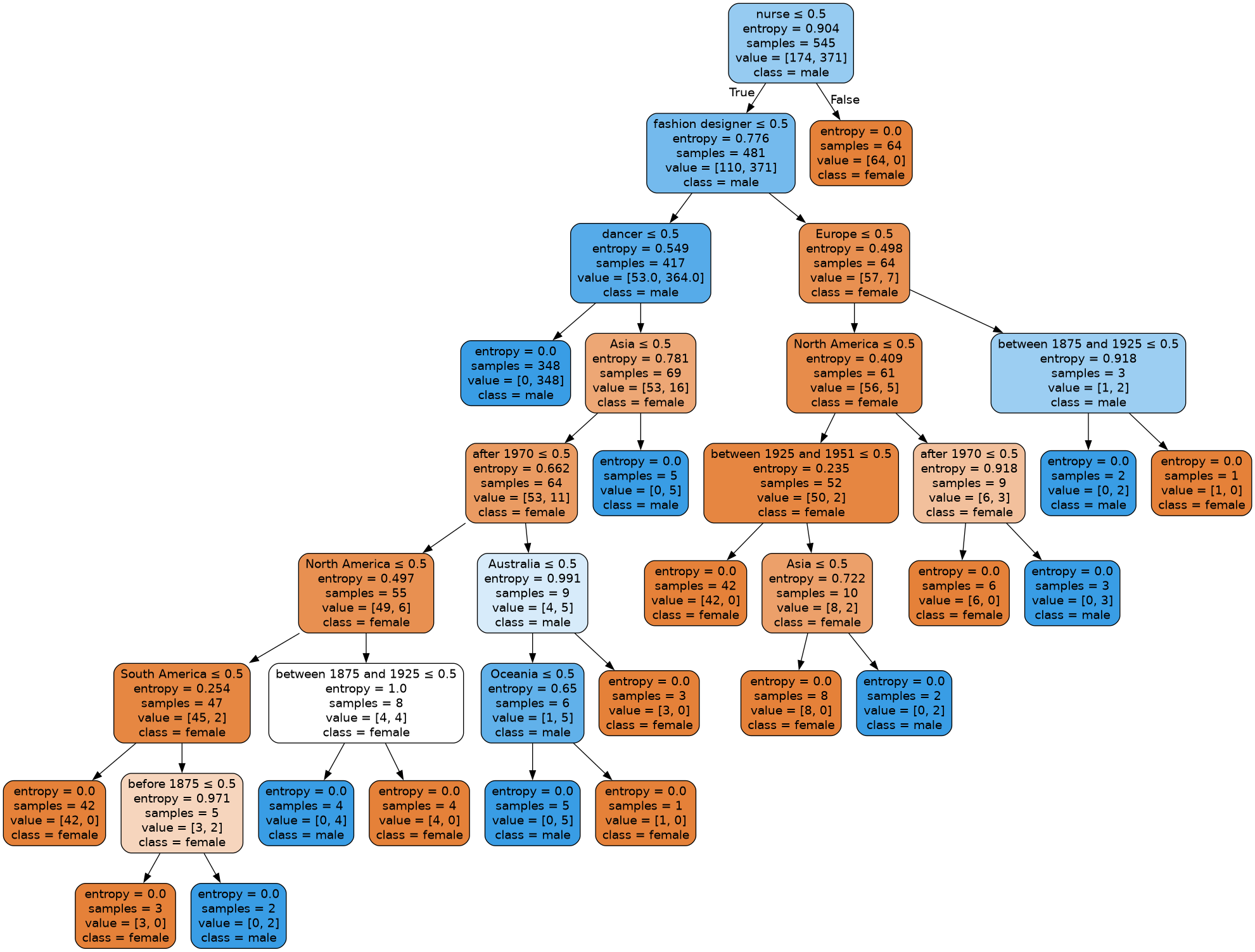}
						
					} 
					\qquad
					\subfloat[]
					{

						\includegraphics[width=0.3\textwidth]{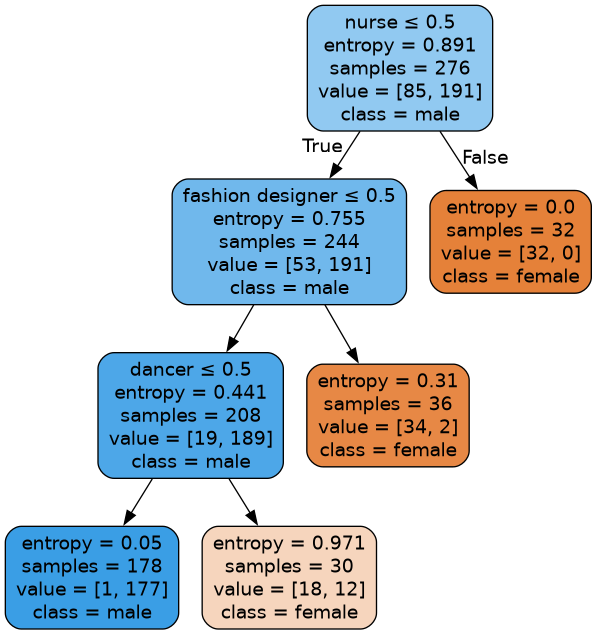}
					} 
					\caption{Decision trees for RoBERTa-base. The tree on the left is extracted with $n=18$ and $m=546$ examples ($k=15$). The tree on the right is extracted with $n=3$ and $m=277$ examples ($k=10$), see \cref{tab:sample}. It can be noticed that the most relevant features are   the  occupations, followed by  birth period and location. 
					}
					\label{fig:surrogate_trees}
				\end{figure*} 
				
			\end{document}